%% file: copco_paper_arxiv.tex
\documentclass[a4paper,twocolumn]{article}

\usepackage[utf8]{inputenc}
\usepackage[a4paper,textwidth=506.295pt,centering]{geometry}
\usepackage{amssymb}
\usepackage{amsmath}
\usepackage{amsthm}
\usepackage{booktabs}
\usepackage{mathrsfs}
\usepackage{verbatim}
\usepackage{lscape}
\usepackage{mathtools}
\usepackage{float}
\usepackage{enumitem}
\usepackage[justification=centering]{caption}
\usepackage{subcaption}
\usepackage{longtable}
\usepackage{xspace}
\usepackage[ruled]{algorithm2e}
\SetAlgoSkip{bigskip}
\SetKwInput{KwData}{Input}
\SetKwInput{KwResult}{Output}
\SetKwComment{Comment}{// }{}
\usepackage{tikz-cd}
\usepackage{thmtools}
\usepackage{hyperref}
\usepackage{multirow}
\usepackage{array}
\newcolumntype{C}{>{\(}c<{\)}} 
\usepackage[numbers,sort&compress]{natbib}
\setcitestyle{numbers,square}
\let\citet\citep

\let\citealt\citep

\theoremstyle{plain}
\newtheorem{theorem}{Theorem}[section]
\newtheorem{corollary}[theorem]{Corollary}
\newtheorem{proposition}[theorem]{Proposition}

\theoremstyle{definition}
\newtheorem{definition}[theorem]{Definition}
\newtheorem{example}[theorem]{Example}

\theoremstyle{remark}
\newtheorem{remark}[theorem]{Remark}

\newcommand{\ZZ}{\mathbb{Z}}

\newcommand{\RR}{\mathbb{R}}

\newcommand{\Mid}{\,\middle|\,}

\graphicspath{{figures/}}

\title{Fixation Sequences as Time Series: A Topological Approach to Dyslexia Detection}

\author{Marius Huber\thanks{Supported by the Swiss National Science Foundation (grant no. 209413).} \quad David R. Reich\thanks{Supported by the Swiss National Science Foundation (grant IZCOZ0\_220330/1 (EyeNLG))} \quad Lena A. J{\"a}ger
\vspace{0.1cm}\\
        Department of Computational Linguistics,
        University of Zürich, Switzerland \\
        \texttt{\{marius.huber,davidrobert.reich,lenaann.jaeger\}@uzh.ch}
}

\date{}

\begin{document}

\maketitle

\begin{abstract}
  Persistent homology, a method from topological data analysis, extracts robust, multi-scale features from data.
  It produces stable representations of time series by applying varying thresholds to their values (a process known as a \textit{filtration}).
  We develop novel filtrations for time series and introduce topological methods for the analysis of eye-tracking data, by interpreting fixation sequences as time series, and constructing ``hybrid models'' that combine topological features with traditional statistical features.
  We empirically evaluate our method by applying it to the task of dyslexia detection from eye-tracking-while-reading data using the Copenhagen Corpus, which contains scanpaths from dyslexic and non-dyslexic L1 and L2 readers.\footnote{
    All code to run experiments is available at \url{https://github.com/DiLi-Lab/topological-dyslexia-detection} and archived at \url{https://doi.org/10.5281/zenodo.19238279}.
  }
  Our hybrid models outperform existing approaches that rely solely on traditional features, showing that persistent homology captures complementary information encoded in fixation sequences.
  The strength of these topological features is further underscored by their achieving performance comparable to established baseline methods.
  Importantly, our proposed filtrations outperform existing ones.
\end{abstract}


\section{Introduction}\label{sec:introduction}

Topological data analysis (TDA) has emerged as a powerful framework for extracting features from complex data sets that capture not only more traditional, statistics-based properties of a data set, but also its shape.
These features are robust to noise and coordinate-invariant, and useful in data analysis, visualization and machine learning tasks~\cite{chazal_intro_to_TDA,Carlsson_topology_and_data}.
In recent years, persistent homology, a key tool within TDA, has been increasingly applied to a wide variety of data sets (for accessible introductions, see~\citet{Carlsson_Vejdemo-Johansson,ghrist_barcodes}).
Informally, computation of persistent homology is obtained from a so-called filtration on a data set, which can be thought of as a way of ``sweeping'' the data set with respect to some relevant feature.
Persistent homology tracks topological features (the ``shape'') of a data set across various stages of a filtration, producing a provably stable representation of the topology and geometry of the underlying data set~\cite{cohen_steiner_stability_of_persistence_diagrams}.

If the data set at hand is a time series, that is, a sequence of values indexed by time, persistent homology may be computed from a filtration that is defined on the time series itself, and it enables the characterization of geometric and dynamical structures intrinsic to temporal signals~\cite{gidea_katz_tda_financial_time_series,perea_sliding_windows_persistence,seversky_tda_time_series}.
In this context, a filtration may be defined, for example, by thresholding the values the time series attains, and gradually letting that threshold sweep the entirety of the time series.
For an illustration, see Figure~\ref{fig:time_series_example}, which shows a time series (in red) being swept by a horizontal line (in blue); see Example~\ref{exp:main_example} for a full discussion of this.
The resulting persistent homology can thus capture properties of the time series such as its amplitude and periodic behavior.

The main contribution of this work is twofold: first, we extend existing topological methods for time series classification, and second, we introduce the above framework to eye-tracking data and empirically evaluate whether TDA-based features add additional information that is not captured by traditional statistical or hand-crafted features.
In existing work, persistent homology of a time series is usually computed with respect to a so-called horizontal filtration that sweeps the time series by horizontal lines of increasing heights~\cite{ost_banana_trees,montesano_dynamically_maintaining_ph_of_time_series}.
In particular, this horizontal filtration is agnostic to the temporal coordinate of the time series.
In order to capture the temporal information contained in a time series, we introduce non-horizontal filtrations which explicitly depend on the temporal coordinate.
In the case where the time series is a fixation sequence, these non-horizontal filtrations enable sensitivity to, for instance, the total number of fixations in a sequence, the time elapsed between consecutive fixations, or the presence and number of regressions.
Second, and as a case study introducing persistent homology to eye-tracking data, we consider the task of dyslexia detection from reading data.
Concretely, we extend existing models for dyslexia detection to contain the TDA-based features, in order to investigate whether persistent homology can provide features which are complementary to more traditional, hand-crafted ones.

To extract topological features from a fixation sequence of an individual reading a text, we first translate the information contained in it into a pair of time series: one containing the horizontal coordinates of the fixations indexed by the time of their onset, and the other containing the vertical coordinates.
This separation of horizontal and vertical movements is done because our topological approach is suitable only for time series where the values are numbers (as opposed to vectors).
In the context of eye movements, however, this separation also enables persistent homology to capture reading behavior constrained to a single line (from the horizontal movements) as well as that happening across lines (from vertical movements); this is particularly useful when dealing with fixation sequences stemming from reading of multiline stimuli, as is the case in the present article.
From these two time series, we compute persistent homology, and combine the resulting topological features with the traditional ones obtained from existing models for dyslexia detection.
We find that when adding these topological features, the resulting models outperform their counterparts that use only traditional features.
Moreover, we find that even when removing all traditional features, the topological features on their own are sufficient to achieve near-state-of-the-art classification performance in dyslexia detection.
This suggests that persistent homology captures aspects of fixation sequences that are complementary to those captured by traditional methods.
For the above, we train and evaluate various pipelines on the Copenhagen Corpus of Eye Tracking Recordings from Natural Reading of Danish Texts (CopCo)~\cite{hollenstein-etal-2022-copenhagen,bjornsdottir2023dyslexia,reich-etal-2024-reading}, a large eye-tracking data set specifically designed for dyslexia research.


\section{Background and Related Work}\label{sec:background}

We now review how persistent homology may be used for time series classification, and then provide relevant background on dyslexia detection.


\subsection{Persistent Homology for Time Series Classification}\label{sec:background_persistent_homology_for_time_series_classification}

We provide a conceptual review of persistent homology in time series classification, as described, for instance, in~\citet{ost_banana_trees} or \citet{montesano_dynamically_maintaining_ph_of_time_series}.
For a more technical discussion, we refer to Appendix~\ref{appendix:persistent_homology_of_time_series}.

In the following, we use the term \emph{time series} to refer to a finite sequence \(\left((t_{1},x_{1}),\dots,(t_{n},x_{n})\right)\) of \(n\) pairs of real numbers, where the \(t_{1}<t_{2}<\cdots<t_{n}\) represent points in time, and \(x_{1},x_{2},\dots,x_{n}\) denote the values that the time series attains at the respective points in time.
In the context of eye-tracking data, for example, a fixation sequence naturally gives rise to a time series where the \(t\)-values represent the onset of a fixation and the \(x\)-values represent either the horizontal or vertical coordinate of the corresponding fixation.
The former is expected to be more sensitive to eye movements within a single line than to those across lines, and vice versa for the latter.\footnote{
  One could generalize the notion of time series to one that captures both the horizontal and the vertical coordinate of a fixation.
  This, however, would yield a so-called multivariate time series, a format that is unsuitable for the methods of time series classification outlined here.
  Note also that with time series stemming from other domains such as finance or audio signals, we would not necessarily expect one coordinate to be more informative than others.
}
In our experiments, we therefore work with both time series extracted from a single fixation sequence (see Section~\ref{sec:pipeline} for details).

Persistent homology~\cite{ghrist_elementary_applied_topology,edelsbrunner_computational_topology} is a topological tool that, broadly speaking, ``sweeps'' a time series and produces a so-called persistence diagram that captures characterizing features of the time series.
One approach to this is to represent a time series \((t_{1},x_{1}),\dots,(t_{n},x_{n})\) by the graph in the \((t, x)\)-plane that is obtained by drawing a straight line connecting the point \((t_{1},x_{1})\) with \((t_{2},x_{2})\), one connecting \((t_{2},x_{2})\) with \((t_{3},x_{3})\), and so on.
Persistent homology is obtained by sweeping this graph by a horizontal line moving from bottom to top, and recording the evolution of the number of line segments that constitute the portion of the graph lying below the horizontal line.

\input{tables/table_fixation_sequence.tex}

\begin{figure*}
  \centering
  \def\svgscale{1}
  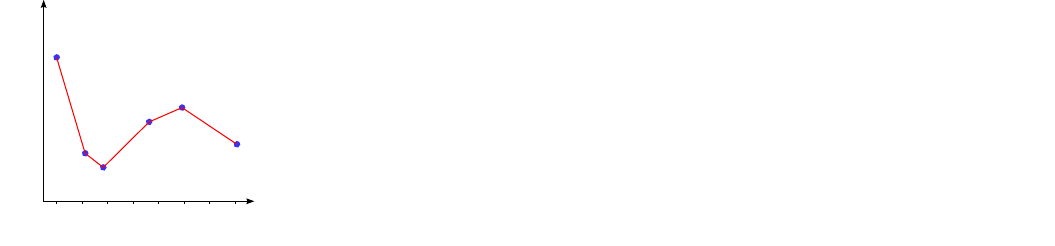
  \caption{Graph corresponding to the time series from Example~\ref{exp:main_example} (leftmost panel) and three stages of sweeping of the graph by a horizontal line for increasing heights (second through fourth panels).
    The number of line segments constituting the portion of the graph lying under the horizontal line changes from one to two and back to one as the height of the horizontal line changes from \(x=200.0\) to \(x=400.0\) and to \(x=600.0\), respectively.}
  \label{fig:time_series_example}
\end{figure*}

\begin{example}[Sweeping the time series corresponding to a fixation sequence by horizontal lines]\label{exp:main_example}
  Consider the horizontal movements of the fixation sequence given in Table~\ref{table:fixation_sequence} (a procedure analogous to the following one applies to the vertical movements).
  The corresponding time series is specified by the sequence \[
    \begin{aligned}
      \left((t_{1},x_{1}),\dots,(t_{6},x_{6})\right)=( & (0,963.1), (244,210.7),      \\
                                                       & (366,100.0), (726,457.2),    \\
                                                       & (984,569.6), (1415, 281.6)).
    \end{aligned}
  \]
  Its graph is obtained by connecting consecutive \((t,x)\)-pairs by straight lines; it is pictured in the leftmost panel of Figure~\ref{fig:time_series_example}, where fixations are indicated by blue dots, and dotted lines indicate the \(t\)- and \(x\)-entries of the pairs that constitute the time series.
  Sweeping this graph starts out with a horizontal line that is situated at a height that is theoretically infinitely far below the graph.
  For a horizontal line placed at any height that is smaller than the global minimum of the time series, the portion of the graph lying under that line is empty.
  As the height of the horizontal line passes the global minimum value \(x=100.0\), this changes: for a horizontal line at height, for example, \(x=200.0\), the portion of the graph lying below it now consists of a line segment near the point \((366,100.0)\) in the \((t,x)\)-plane; see the second panel from the left of Figure~\ref{fig:time_series_example}, where the horizontal line at height \(x=200.0\) is indicated in blue, and where the portion of the graph lying below it is indicated by the thickened segment.
  As the horizontal line moves further upwards, we see that a new line segment is ``born'' as the line passes the value \(x=281.6\): at height, for example, \(x=400.0\) the portion of the graph lying under the horizontal line now consists of two separate line segments; see the second panel from the right of Figure~\ref{fig:time_series_example}.
  This newly born segment ``dies'' as the height of the horizontal line passes \(x=569.6\), since it gets absorbed into the line segment that was born earlier at height \(x=100.0\).\footnote{
    One defines that ``older'' connected line segments absorb ``younger'' ones (and not vice versa) for technical reasons; without this definition, computation of persistent homology would not lead to a well-defined result~\cite[Chapter VII.1]{edelsbrunner_computational_topology}.
  }
  Correspondingly, the number of segments is reduced back to one at height, for example, \(x=600.0\); see the rightmost panel of Figure~\ref{fig:time_series_example}.
  Note that this behavior is a consequence of the fact that the time series returns to a value near \(x=200\) toward its end (indicating a within-line regression of the reader).
  Indeed, if no within-line regressions were present, the graph in Figure~\ref{fig:time_series_example} would lack the local maximum responsible for the death event.
\end{example}

Sweeping the graph corresponding to a time series thus yields a sequence of the number of segments present at each height of the horizontal line.
This information is recorded in a \emph{persistence diagram}~\cite{edelsbrunner_topological_persistence}; we illustrate this concept in the following example, and refer to Appendix~\ref{appendix:persistent_homology_of_time_series} for details.

\begin{figure}
  \centering
  \def\svgscale{1}
  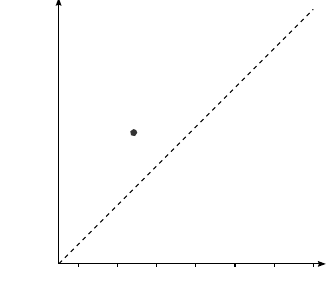
  \caption{Persistence diagram corresponding to the time series from Example~\ref{exp:main_example}, obtained from sweeping by horizontal lines.
    The point located at \((281.6,569.6)\) records the line segment that is born at \(x=281.6\) and dies at \(x=569.6\).}
  \label{fig:persistence_diagram_example}
\end{figure}

\begin{example}[Persistence diagram]\label{exp:ordinary_persistence_diagram}
  Consider the time series given in Example~\ref{exp:main_example}.
  By definition, a persistence diagram is obtained by placing a point at the coordinates \((b,d)\) in \(\RR^{2}\), where \(b\) and \(d\) record the height of the horizontal line at which a line segment is born and dies, respectively.
  In Example~\ref{exp:main_example}, the first line segment is born at height \(x=100.0\), whereas the second one is born at \(x=281.6\) and dies at \(x=569.6\).
  While the former never dies and thus does not have a death time, the birth and death times of the latter are recorded by including the point \((281.6,569.6)\) in the persistence diagram; see Figure~\ref{fig:persistence_diagram_example}.
  Additionally, a persistence diagram is endowed with the diagonal line which indicates the region of the persistence diagram where birth time equals death time; the greater the vertical distance of a point corresponding to a line segment to this diagonal line, the greater the difference between its death and birth time, and hence the longer this segment survives for.
  Line segments surviving for a long time are deemed to represent more ``significant'' features of the time series than short-lived ones.
  A persistence diagram thus provides a summary of the evolution of the line segments as one sweeps a time series by horizontal lines.
\end{example}

We point out that persistence diagrams computed from a time series like above typically contain many more points than that pictured in Figure~\ref{fig:persistence_diagram_example}.
Indeed, the ``sparsity'' of that persistence diagram is a consequence of the fact that the underlying time series is very short.

\begin{remark}\label{rem:extended_persistence}
  The process above produces one persistence diagram from a time series, and is known as \emph{ordinary persistent homology}.
  In our experiments, we also use \emph{extended persistent homology}~\cite{cohen_steiner_edelsbrunner_harer_extended_persistence}.
  In a nutshell, this is obtained by first sweeping the graph corresponding to a time series by horizontal lines as above, and then sweeping the same graph in the opposite direction---from top to bottom---to capture additional information.
  We point out that extended persistent homology produces three persistence diagrams from a time series, and refer to Appendix~\ref{appendix:persistent_homology_of_time_series} for details.
\end{remark}

\begin{figure*}
  \centering
  \def\svgscale{1}
  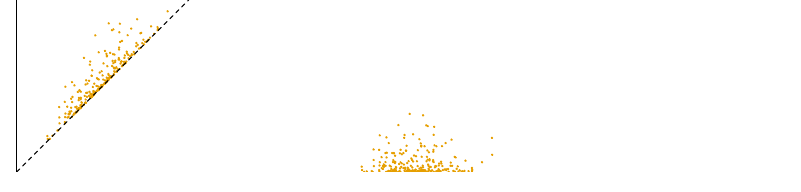
  \caption{Schematic showing the transformation of a persistence diagram \(D\) in (birth, death)-coordinates (left) to (birth, death-birth)-coordinates (middle) by means of a shearing \(\pi\), and the resulting persistence surface \(\rho_{\pi(D)}\) (right).}
  \label{fig:persistence_image}
\end{figure*}

In order to make persistence diagrams usable in machine learning pipelines, one must transform them into vectors of a fixed length.
Methods for doing so abound~\cite{di_fabio_cpx_polynomials,bubenik_persistence_landscape,chazal_silhouette,carriere_topological_vector}, and we choose to use the method known as \emph{persistence image}~\cite{adams_persistence_images}.
In a nutshell, a persistence image is obtained from a persistence diagram by first applying a shearing that maps the diagonal of the persistence diagram to a horizontal line.
The resulting diagram is then converted into a ``heat map'' by placing a bivariate Gaussian distribution with a given standard deviation (also referred to as \emph{bandwidth} in the present context) at each point of the diagram.
This yields a single-channel image commonly referred to as persistence surface; see Figure~\ref{fig:persistence_image} for an illustration of the process of obtaining a persistence surface from a persistence diagram.
Finally, the persistence image is obtained from the persistence surface by discretizing it into a pixel grid of a \emph{resolution} prescribed by the user.
We refer to Appendix~\ref{appendix:persistent_homology_and_ml} for details on persistence images, and conclude this section by pointing out that a persistence image created as above depends on the bandwidth and resolution used; in applications, these are treated as hyperparameters.


\subsection{Predicting Dyslexia from Eye-Tracking-While-Reading Data}\label{sec:background_eyetracking}

Dyslexia is one of the most common learning disabilities that affects reading, impacting an estimated 9–12\% of the population~\cite{katusic2001incidence, shaywitz1998functional}.
For children with dyslexia, identifying the condition early is crucial for helping them succeed and keep pace in school~\cite{glazzard2010impact, torgesen2000individual, vellutino2004specific}.
Eye-tracking technology has emerged as a promising tool in this context.
It offers a good balance between cost, invasiveness and data quality, and as the technology continues to improve and become more affordable, it is increasingly realistic to use it on a large scale.
A single eye tracker in a school could serve multiple screening purposes, for example, evaluating reading proficiency in a second language~\cite{berzak2018assessing}.
These developments raise the question of whether dyslexia can be detected from eye movements.
Researchers have been exploring links between dyslexia and eye-movement behavior since the 1980s~\cite{olson1983dyslexic, pavlidis1981eye, rayner1985faulty}.
While early studies focused on deficits in eye-movement control, later research shifted attention toward difficulties in phonological decoding as the core cause of dyslexia.
Importantly, eye movements are closely tied to phonological decoding~\cite{IDA2026} throughout life~\cite{rayner1995phonological, rayner1998phonological, blythe2014developmental, leinenger2019survival, milledge2019changing}, which provides a strong theoretical basis for using eye-movement data to infer dyslexia.
Indeed, a growing body of experimental research~\cite{rello2015detecting, nilsson2016screening, asvestopoulou2019dyslexml, raatikainen2021detection, haller2022eye, jothi2022prediction, shalileh2023identifying, bjornsdottir2023dyslexia, laurinavichyute2025automatic} shows that machine learning approaches can effectively screen for dyslexia using features derived from eye movements.
Here, we use the work of \citet{bjornsdottir2023dyslexia} and \citet{raatikainen2021detection} as reference methods.


\section{Problem Setting}\label{sec:problem_setting}

We empirically study the usefulness of casting fixation sequences as time series, and combining topological features computed from them with traditional eye-movement features.
This is to investigate whether the former capture information complementary to that captured by the latter.
We do this for the problem of dyslexia detection from eye-tracking-while-reading data for two levels of aggregation: trial-level and reader-level.

For each individual \(j\), their reading behavior during the reading of stimulus text \(k\) is recorded as a sequence \[
  T_{j,k}=\left((t_{1},x_{1},y_{1}),\ldots,(t_{M_{j,k}},x_{M_{j,k}},y_{M_{j,k}})\right)
\] of \(M_{j,k}\) fixations, where \(t_{m}\) denotes the onset of the \(m\)-th fixation and \((x_{m},y_{m})\) denotes its location.
In our training data set \(\mathcal{D}=\left\{(T_{j,k}, c_{j})\right\}_{j,k}\), each fixation sequence \(T_{j,k}\) is associated with a binary label \(c_{j}\in\{0, 1\}\), indicating the absence (\(c_{j}=0\)) or diagnosis (\(c_{j}=1\)) of dyslexia in the reader.
In trial-level aggregation, we aim to model the conditional probability \(P\left(c_{j}\Mid T_{j,k}\right)\), whereas in reader-level aggregation, we aim to model \(P\left(c_{j}\Mid \left\{T_{j,k}\right\}_{k}\right)\).
The performance of our classification models will be evaluated using the Area Under the Receiver Operating Characteristic curve (ROC AUC).
The ROC AUC provides a robust measure of discriminatory power, as it considers the trade-off between true positive and false positive rates across various classification thresholds.
In particular, ROC AUC is less sensitive to class imbalance than accuracy, which is relevant in our case given the unequal distribution of dyslexic and non-dyslexic participants in the data set we used (see Section~\ref{sec:data_set}).


\section{Method}\label{sec:method}

We inject our topological features extracted from fixation sequences into the two models introduced in~\citet{bjornsdottir2023dyslexia} and~\citet{raatikainen2021detection} (the ``baseline models'').
We compare the resulting ``hybrid models'' trained with these features against the two baseline models, and against models that take into account only the topological features (the ``TSH-models'', where TSH stands for ``time series homology'').
We perform this for both trial-level and reader-level aggregation, and including/excluding data stemming from non-native readers, which results in four experiment settings (here, a trial refers to a participant's reading of one stimulus text; see Section~\ref{sec:data_set} for details).

We now introduce an extension of the sweeping method discussed in Section~\ref{sec:background_persistent_homology_for_time_series_classification}, and then present our dyslexia detection pipeline for each of the three model classes mentioned above.


\subsection{Computation of Topological Features and Non-Horizontal Filtrations}\label{sec:computation_of_topological_features_and_non_horizontal_filtrations}

To compute the topological features of a time series, we use the horizontal filtration as described in Section~\ref{sec:background_persistent_homology_for_time_series_classification}.\footnote{
  The term ``filtration'' is simply a more technical term for a sweeping method.
}
A shortcoming of that method is that this process is agnostic to the time coordinate of the time series.
Indeed, the birth and death times of line segments are dependent only on the values of the local extrema of the time series, but not on the point in time at which these local extrema occur.\footnote{\label{fn:horizontal_is_agnostic_to_time}
  As an example of this, observe that inverting the time coordinate of a time series and computing persistent homology using the horizontal filtration yields the same result as computing persistent homology of the original time series.
}
We therefore introduce three new filtrations, which we call \emph{sloped}, \emph{sigmoid} and \emph{arctan filtrations}.

We illustrate the sloped filtration with an example, and refer the reader to Appendix~\ref{appendix:non_horizontal_filtrations} for technicalities, and details on the sigmoid and arctan filtrations.

\begin{figure}
  \centering
  \def\svgscale{1}
  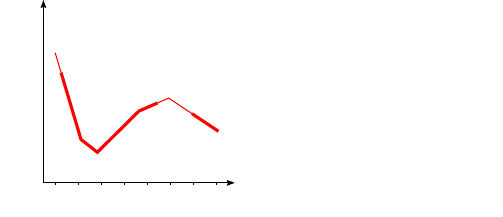
  \caption{Two stages of sweeping of the graph corresponding to a time series by a line of slope \(c\approx-0.3\) moving from left to right (indicated in blue).
    The number of line segments constituting the portion of the graph lying to the left of the line changes from two to one as the line moves from the left panel to the right one.}
  \label{fig:time_series_example_sloped}
\end{figure}

\begin{example}[Sloped filtration of a time series]\label{exp:sloped_filtration}
  Consider the time series from Example~\ref{exp:main_example}, whose corresponding graph is pictured in the left of Figure~\ref{fig:time_series_example}.
  To start, one places a line with slope \(c\neq 0\) far enough to the left in the \((t,x)\)-plane so that the graph corresponding to the time series lies entirely to the right of that line (in experiments, \(c\) is a tunable hyperparameter).
  One then lets the sloped line move horizontally to the right while keeping track of the evolution of the number of line segments that constitute the portion of the graph that lies to the left side of the line.
  As in the case of the horizontal filtration, the birth and death times of the line segments being born and dying along the way are recorded in a persistence diagram.
  For an illustration of this, see Figure~\ref{fig:time_series_example_sloped}.
  In that figure, the graph corresponding to the time series is swept by a line of slope \(c\approx-0.3\) moving from left to right (indicated in blue), and the portion of the graph lying to the left of that line is indicated by the thickened segments.
\end{example}

Crucially, the sloped filtration is no longer agnostic to the time coordinate.\footnote{
  Indeed, inverting the time coordinate of the time series and using the sloped filtration will result in a persistence diagram that is different from the one computed from the non-inverted time series; cf. Footnote~\ref{fn:horizontal_is_agnostic_to_time}.
}
For eye movements, this makes the filtration sensitive to, for instance, the total number of fixations.
For the sigmoid and arctan filtrations, one replaces the sloped line by a curve described by the logistic and the arctan functions, respectively, and lets that curve move from the left to the right while keeping track of the number of line segments lying to the left of the curve, as before.
These filtrations can ``see'' backward and forward in the time coordinate thanks to their curved shape, thus capturing, for instance, the time elapsed between two fixations.


\subsection{Pipeline for Dyslexia Detection}\label{sec:pipeline}

Given a fixation sequence \(T_{j,k}=\left\{(t_{i}, x_{i}, y_{i})\right\}_{i=1}^{M_{j,k}}\) stemming from the \(k\)-th trial of individual \(j\) and consisting of \(M_{j,k}\) fixations, we wish to predict the binary label \(c_{j}\in\left\{0, 1\right\}\), indicating the absence or diagnosis of dyslexia.

\begin{figure*}
  \centering
  \def\svgscale{1}
  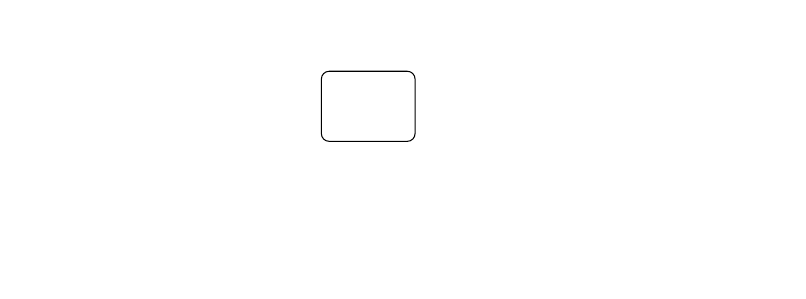
  \caption{Computation of the topological feature vector \(v_{j,k}^{\mathrm{TSH}}\) from a fixation sequence.
    A fixation sequence \(T_{j,k}\) is split up into two time series \(T_{j,k}^{x}\) and \(T_{j,k}^{y}\), after which the persistence diagrams (PDs) and the corresponding persistence images (PIs) are computed for each.
    The persistence images are each flattened, concatenated and reduced via PCA to yield the feature vector \(v_{j,k}^{\mathrm{TSH}}\).
  }
  \label{fig:pipeline_schematic}
\end{figure*}

For the hybrid models combining traditional features with topological ones, we first compute the feature vector \(v_{j,k}^{\mathrm{BL}}\) from \(T_{j,k}\) via the relevant baseline model.
For the topological feature vector, we split up the fixation sequence \(T_{j,k}\) into the sequences \(T_{j,k}^{x}\coloneqq\left\{(t_{i}, x_{i})\right\}_{i=1}^{M_{j,k}}\) and \(T_{j,k}^{y}\coloneqq\left\{(t_{i}, y_{i})\right\}_{i=1}^{M_{j,k}}\) containing just the \(x\)- and \(y\)-coordinates of the fixation sequence, respectively.
We interpret each of \(T_{j,k}^{x}\) and \(T_{j,k}^{y}\) as a time series, min-max-scale their values, and compute persistent homology with respect to either the horizontal, sloped, sigmoid or arctan filtration; for the latter three filtrations, we treat the slope \(c\) as a tunable hyperparameter.
This yields one or three persistence diagrams (PDs) for each of \(T_{j,k}^{x}\) and \(T_{j,k}^{y}\), depending on whether ordinary or extended persistence is computed (cf. Remark~\ref{rem:extended_persistence}).
This results in two or six diagrams, respectively, for each initial fixation sequence \(T_{j,k}\).
Each persistence diagram is vectorized into a persistence image (PI) of a fixed resolution.
The persistence images are flattened, min-max-scaled and concatenated into a single vector.
Using a resolution of \((n,m)\) for the persistence images, this resulting vector has length \(2\cdot 1\cdot n\cdot m=2nm\) if ordinary persistence is used, and length \(2\cdot 3\cdot n\cdot m=6nm\) if extended persistence is used; in our experiments, we set \(n=m=50\) throughout.
Pixel values of persistence images are highly correlated, so we reduce this vector using PCA to obtain the topological feature vector \(v_{j,k}^{\mathrm{TSH}}\) corresponding to the fixation sequence \(T_{j,k}\); in our experiments, we use PCA with \(250\) and \(3\cdot250=750\) components for ordinary and extended persistence, respectively, as this number of components is empirically sufficient to capture the information stored in the pixel values.
For a schematic depiction of the computation of \(v_{j,k}^{\mathrm{TSH}}\), see Figure~\ref{fig:pipeline_schematic}.
Finally, we concatenate \(v_{j,k}^{\mathrm{BL}}\) and \(v_{j,k}^{\mathrm{TSH}}\) into a single feature vector \(v_{j,k}\), containing both traditional and topological features of \(T_{j,k}\).
For trial-level aggregation, we pass \(v_{j,k}\) into the classifier specified by the baseline model for prediction of \(c_{j,k}\in\left\{0, 1\right\}\); for the baseline introduced in~\citet{bjornsdottir2023dyslexia}, this is a support vector machine (SVM), whereas for the one introduced in~\citet{raatikainen2021detection}, both a SVM and a random forest (RF) are used.
For reader-level aggregation, we compute the mean of all vectors \(v_{j,1},v_{j,2},\dots\) stemming from individual \(j\) before passing the resulting vector into the respective classifier for prediction of \(c_{j}\in\left\{0, 1\right\}\).

For the baseline and TSH-models, we pass just the feature vector \(v_{j,k}^{\mathrm{BL}}\) and \(v_{j,k}^{\mathrm{TSH}}\), respectively, into the classifier; the latter is either a SVM or a RF for the baseline models, and a SVM for the TSH-models.


\section{Experiments}\label{sec:experiments}

In the following, we describe the data set and evaluation protocol used in our experiments; all code to run them is available at \url{https://github.com/DiLi-Lab/topological-dyslexia-detection} and archived at \url{https://doi.org/10.5281/zenodo.19238279}.


\subsection{Data Set}\label{sec:data_set}

The Copenhagen Corpus of Eye-Tracking Recordings from Natural Reading (CopCo; \citealt{hollenstein-etal-2022-copenhagen}) is a corpus of eye-tracking data collected during natural reading of Danish texts.
These consisted of 46 transcriptions of Danish speeches and 12 articles from Danish Wikipedia, which were presented to participants in multiline portions containing at most \(10\) lines.
We refer to a participant's reading of one such portion as a trial.
Participants read a varying number of texts according to their own speed, and reading data was collected tracking the right eye, using an EyeLink 1000 Plus eye tracker with a sampling rate of \(1000\) Hz.
The data set comprises recordings from 58 participants, including 22 L1 (native) speakers of Danish not diagnosed with dyslexia, 19 L1 speakers diagnosed with dyslexia, and 17 L2 speakers of Danish without dyslexia.\footnote{
  The non-L1 speakers stem from different L1 backgrounds.
  Moreover, as in~\citet{reich-etal-2024-reading}, we refer to the non-L1 speakers as L2 speakers for simplicity, even though Danish is L3 or L4 for some of the participants.
}
We discarded the data from four out of the 22 L1 readers without dyslexia due to poor calibration or a diagnosed attention disorder.
Moreover, the data from one of the L1 readers with dyslexia was discarded due to the absence of a dyslexia screening result, leaving data from a total of 53 readers.
The fixation sequences used in our models were taken directly from CopCo.
These were previously extracted using the DataViewer software by SR Research, which applies a velocity- and acceleration-based saccade detection method~\citep{hollenstein-etal-2022-copenhagen}.
We discarded those trials consisting of fewer than five fixations.
Each participant was assigned a binary label based on the original CopCo annotations: participants diagnosed with dyslexia were labeled as ``dyslexic'', and all others as ``non-dyslexic''.
This results in a data set consisting of 4653 fixation sequences (out of which 1134 or 24.37\% are labeled as ``dyslexic'') on the trial-level, and consisting of \(53\) readers (out of which 18 or 33.96\% are labeled as ``dyslexic'') on the reader-level.


\subsection{Evaluation Protocol}\label{sec:evaluation_protocol}

We train the pipeline introduced in Section~\ref{sec:pipeline} for each of the hybrid, baseline and TSH-models, and in each of the settings specified by aggregation level and inclusion/exclusion of non-L1 readers.
We evaluate the pipelines via \(k\)-fold nested cross-validation, where \(k=5\) for trial-level aggregation and \(k=10\) for reader-level aggregation, using Area Under the Receiver Operating Characteristic curve (ROC AUC) as the scoring function.
In both the trial-level and reader-level aggregation, the folds are stratified according to dyslexia label.
In the trial-level aggregation, the folds are additionally such that data from a given reader appears in precisely one fold, ensuring that the model always makes predictions about unseen readers.

\subsubsection{Hyperparameter Tuning}\label{sec:hyperparameter_tuning}

For the hybrid models, the pipeline described in Section~\ref{sec:pipeline} depends on the slope parameter \(c\neq 0\) specifying the filtration, the bandwidth \(\sigma>0\) of the persistence image, and the parameters of the classifier stemming from the respective baseline model; we tune these via randomized search according to the distributions for the slope and bandwidth given in Table~\ref{table:hyperparameter_search}, and the distributions for the parameters of the classifier as given in the respective original implementation.
The baseline models depend only on the parameters of their respective classifiers; we tune these according to their original implementations.
For the TSH-models, the pipeline depends on the slope parameter \(c\neq 0\), the bandwidth \(\sigma>0\), and on the hyperparameters specifying the SVM classifier; we tune these via randomized search according to the distributions given in Table~\ref{table:hyperparameter_search}.
Note that we excluded values between \(-0.5\) and \(0.5\) for the slope \(c\) for computational stability.
In all instances of randomized search, \(200\) iterations are performed.

\input{tables/table_hyperparameters.tex}


\subsection{Results}\label{sec:results}

In Table~\ref{table:results_roc_auc_short}, we report the classification performance of the model classes described in Section~\ref{sec:method}.

\input{tables/table_results_roc_auc_short.tex}

For each model class, we select the best-performing model and report its mean and standard deviation of the ROC AUC across all folds, and we highlight in bold the best performance in each of the four settings resulting from including/excluding L2-readers and aggregating on trial/reader-level (for a detailed breakdown of performances, see Table~\ref{table:results_roc_auc} in Appendix~\ref{appendix:experiment_results}).
The results show that the hybrid models consistently outperform the baseline and the TSH-models, and, moreover, that the TSH-models perform on par with the baseline models, or come close to doing so.
Finally, the results demonstrate that our novel, non-horizontal filtrations mostly outperform the existing horizontal one.


\section{Discussion}\label{sec:discussion}

This study assessed the value of introducing novel filtrations for time series and of using these filtrations to extract topological features from fixation sequences with persistent homology.
We provide strong empirical evidence that the proposed topological features encode information about reading behavior and are complementary to traditional features: across all experimental settings (trial-/reader-level and including/excluding non-native readers), the hybrid models that combine topological and traditional features consistently achieved the highest ROC AUC scores.
Moreover, seeing that each of the highest-scoring models makes use of one of the newly introduced filtrations, we empirically showed that these provide improvements over existing filtrations.
The competitive performance of the TSH-models further indicates that topological representations of fixation time series are themselves highly informative.
Despite abstracting away from explicit linguistic alignment and relying only on geometric and temporal structure, these models perform on par with established baselines.
This suggests that dyslexia-related differences manifest not only in more traditional, aggregate reading measures, but also in higher-order temporal and structural patterns of eye movements that are captured by persistent homology.
The introduction of non-horizontal filtrations appears particularly relevant, as they explicitly incorporate temporal ordering and spacing of fixations, properties that are central to reading dynamics.

Performance gains are more pronounced at the reader level, which is expected given the aggregation of multiple trials into a more stable behavioral profile.
At the same time, the very high reader-level ROC AUCs, especially when excluding L2 readers, should be interpreted cautiously due to the limited number of participants.
The observed drop in performance when including L2 readers highlights population heterogeneity as an important factor.

Overall, the results demonstrate that topological features derived from fixation sequences are a viable and effective addition to dyslexia detection pipelines, and likely to be useful in analysis of eye-tracking data in general.
Future work should focus on validating these findings on larger and more diverse data sets as well as on tasks other than dyslexia detection, and on interpreting the topological features to better connect them to cognitive theories of reading and clinical practice.


\bibliographystyle{plainnat}
\bibliography{bib}


\appendix\setcounter{figure}{0}
\renewcommand{\thefigure}{\thesection.\arabic{figure}}
\setcounter{table}{0}
\renewcommand{\thetable}{\thesection.\arabic{table}}

\section{Persistent Homology of Time Series}\label{appendix:persistent_homology_of_time_series}

In this section, we review persistent homology in the context of time series.
In doing so, we will limit ourselves to those aspects of persistent homology that are relevant to the setting of this paper, and we refer the reader to~\citet{edelsbrunner_computational_topology,ghrist_elementary_applied_topology,schnider_lecture_notes_25} for details.

In the present setting, we mean by a \emph{time series} a piecewise linear function \(T\colon I\to\RR\), where \(I\subseteq\RR\) is a closed interval.
Furthermore, we require that \(T\) have only finitely many points of non-linearity in its domain.
Note that any such time series is univariate by definition.
In practice, a time series is specified by the finite sequence of tuples \(\left\{(t_{i}, x_{i})\right\}_{i=1}^{n}\subseteq\RR^{2}\), where \(x_{i}\coloneqq T(t_{i})\), \(I=[t_{1},t_{n}]\), and \(t_{2}<t_{3}<\cdots<t_{n-1}\) are the points of non-linearity of \(T\) in the interior of \(I\).
The \emph{graph} of a time series is defined as \(\Gamma(T)\coloneqq\left\{(t,T(t))\Mid t\in I\right\}\subseteq\RR^{2}\).

Given a time series \(T\colon I\to\RR\), a function \(f\colon\Gamma(T)\to\RR\) and a number \(h\in\RR\), the \emph{\(f\)-sublevel set of \(T\) at \(h\)} is defined as \[
  T_{f,h}\coloneqq \left\{(t, T(t))\Mid f(t, T(t))\leq h, t\in I\right\}\subseteq\Gamma(T).
\]
Symmetrically, the \emph{\(f\)-superlevel set of \(T\) at \(h\)} is defined as \[
  T^{f,h}\coloneqq \left\{(t, T(t))\Mid f(t, T(t))\geq h, t\in I\right\}\subseteq\Gamma(T).
\]
In words, \(T_{f,h}\) (resp. \(T^{f,h}\)) consists of that portion of the graph of \(T\) on which \(f\) attains values of at most (resp. at least) \(h\).
We point out that, in practice, the function \(f\) is often given as \(f=g\vert_{\Gamma(T)}\) for some function \(g\colon\RR^{2}\to\RR\).
For ease of exposition, we will assume now that \(f\) is continuous and generic in the sense that no two local extrema of \(f\) map to the same value under \(f\).
Observe that we have \(T_{f,h}=\varnothing\) for small enough values of \(h\).
Indeed, this is true for any value of \(h\) that is strictly smaller than the global minimum of \(f\), which we denote by \(\check{f}\).
Similarly, we have \(T_{f,h}=\Gamma(T)\) for any \(h\geq\hat{f}\), where \(\hat{f}\) is the largest value that \(f\) attains on \(\Gamma(T)\).
Moreover, we have that \(T_{f,h_{1}}\subseteq T_{f,h_{2}}\) whenever \(h_{1}\leq h_{2}\).
Put together, \(\left\{T_{f,h}\right\}_{h=-\infty}^{+\infty}\) forms a sequence of nested subsets of the graph of \(T\) that starts out as the empty set and eventually equals the entire graph \(\Gamma(T)\).
This means that \(\left\{T_{f,h}\right\}_{h=-\infty}^{+\infty}\) forms a \emph{filtration} of \(\Gamma(T)\).
If \(h\) is set to some value smaller than \(\check{f}\) and starts increasing, \(T_{f,h}\) ceases to be empty as soon as \(h\) passes the value \(\check{f}\) from below.
At that value of \(h\), the number of connected components of \(T_{f,h}\) changes from zero to one.\footnote{
  Here and in the following, a connected component is defined as one connected ``piece''; in Example~\ref{exp:main_example}, connected components are the line segments constituting the portion of the graph lying below the horizontal line.}
This logic generalizes from the global minimum value \(\check{f}\) of \(f\) to any local minimum of \(f\): whenever the parameter \(h\) passes the value of a local minimum of \(f\) from below, a new connected component of \(T_{f,h}\) is \emph{born}.
Conversely, whenever \(h\) passes the value of a local maximum \(f\) from below, the number of connected components of \(T_{f,h}\) decreases by one, since in this case an existing connected component \emph{dies} as it is absorbed by one that was born earlier.
By a similar line of argument, we see that the sequence \(\left\{T^{f,h}\right\}_{h=+\infty}^{-\infty}\) forms a filtration of \(\Gamma(T)\) (as \(h\) \emph{de}creases from \(+\infty\) to \(-\infty\)).
The number of connected components of \(T^{f,h}\) increases by one whenever \(h\) passes a local maximum of \(f\) from above, and it decreases by one if it passes a local minimum of \(f\) from above.

\emph{Persistent homology} keeps track of the evolution of the connected components of the \(f\)-sublevel and \(f\)-superlevel sets of a time series that we described in the previous paragraph by means of certain diagrams.
Namely, given a time series \(T\colon I\to\RR\), and some function \(f\colon\Gamma(T)\to\RR\), the thresholding parameter \(h\) is first set to \(-\infty\) and then increased up to \(+\infty\).
In this first phase, we record the births and deaths of connected components of \(T_{f,h}\) by means of a tuple \((b, d)\) for each connected component, where \(b\) and \(d\) denote the birth and death of a component, respectively.
Once \(h\) reaches \(+\infty\), the process is reversed.
That is, we now let \(h\) decrease back to \(-\infty\) in this second phase and record (birth, death)-tuples along the way as before.
Note that the connected component corresponding to \(\check{f}\) (the one that has the smallest birth time of all) never dies in the first phase.
While its death time is thus technically \(d=+\infty\), we treat the first birth in the second phase as its death, and hence this component has death time \(d=\hat{f}\).\footnote{
  In technical terms, this choice stems from the fact that the connected component with birth time \(\check{f}\) dies in relative homology at time \(\hat{f}\) during the second phase of the filtration.
}
Consequently, we discard the event where \(h\) passes \(\hat{f}\) from above from the set of birth events of the second phase.
The resulting collection of all (birth, death)-tuples is recorded in three \emph{persistence diagrams}, each of whose horizontal and vertical axes measure birth and death time, respectively.
The first and second of these diagrams are populated with the (birth, death)-tuples stemming from the first and second phase, respectively, while the third diagram is populated with the single (birth, death)-tuple equaling \((\check{f},\hat{f})\).
Moreover, each of the diagrams is endowed with the diagonal indicating the region where \(b=d\), the idea being that the further a point is from this diagonal, the longer the lifetime (defined as the difference between death time and birth time) of its corresponding connected component is.
These diagrams are called the \emph{ordinary}, \emph{relative} and \emph{essential diagram} of \(f\), respectively.
Note that in the first phase, the death time of any connected component is larger than its birth time, whereas in the second phase the opposite is true.
Hence all points in the ordinary and relative diagram are above and below the diagonal, respectively.
The collection of the ordinary, relative and essential diagrams is referred to as \emph{extended persistent homology} of a time series.
In contrast, \emph{ordinary persistent homology} of a time series is the result that is obtained from stopping the above process after the first phase, which produces just the ordinary diagram.
In that diagram, the point corresponding to the component with birth time \(b=\check{f}\) is either omitted, or recorded on a dashed line atop the rest of the points that serves as a placeholder for infinite death time.

We conclude this section by providing an example of an extended persistence diagram (see Example~\ref{exp:ordinary_persistence_diagram} for an example of an ordinary persistence diagram).

\begin{figure*}
  \centering
  \def\svgscale{1}
  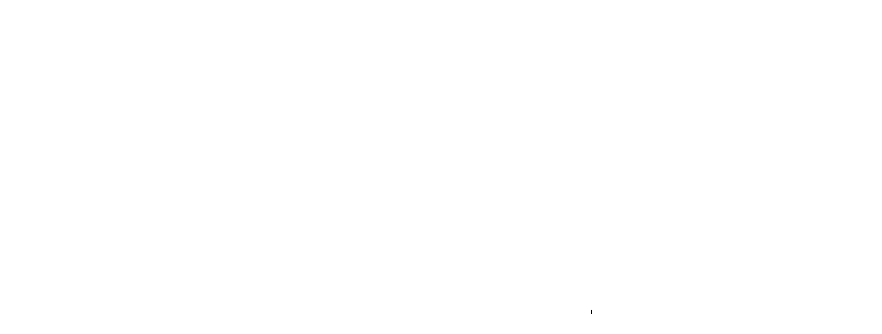
  \caption{Graph of a time series \(T\) with the horizontal filtration indicated by the line \(x=h\) (left), and the combination of the essential, ordinary and relative persistence diagrams (right).}
  \label{fig:extended_persistence}
\end{figure*}

\begin{example}[Extended persistence diagram]\label{exp:extended_persistence_diagram}
  Let \(T\colon I\to\RR\) be a time series and consider the function \(f\colon\Gamma(T)\to\RR\) given by projection onto the vertical axis, that is, \(f(t,T(t))=T(t)\) for \(t\in I\).
  The filtrations \(\left\{T_{f,h}\right\}_{h\in\RR}\) and \(\left\{T^{f,h}\right\}_{h\in\RR}\) may be thought of as sweeping \(\Gamma(T)\) by a horizontal line at vertical height \(h\).
  In the left part of Figure~\ref{fig:extended_persistence}, \(T_{f,h}\) (resp. \(T^{f,h}\)) is given by the portion of \(\Gamma(T)\) that lies below (resp. above) the dashed line defined by \(x=h\), and hence, for the choice of \(h\) depicted, consists of two (resp. three) connected components.
  As \(h\) passes the value \(\check{f}=a_{2}\) from below, the connected component corresponding to the local extremum of \(T\) marked by the square is born.
  By definition, the death time of this component is \(d=\hat{f}\), and we record this putting a point at \((b,d)=(\check{f},\hat{f})\) in the essential diagram of \(f\).
  Similarly, the local minimum of \(T\) at \(t=e_{1}\) gives rise to a connected component of \(T_{f,h}\) with \((b,d)=(e_{1},d_{2})\), and we record this tuple in the ordinary diagram of \(f\).
  Proceeding like this yields the persistence diagram given in the right part of Figure~\ref{fig:extended_persistence}, where we combined the essential, ordinary and relative diagram into one.
\end{example}

For future reference, we call the filtration from Example~\ref{exp:extended_persistence_diagram} the \emph{horizontal filtration} of a time series~\cite{ost_banana_trees,montesano_dynamically_maintaining_ph_of_time_series}.


\section{Persistence Diagrams and Machine Learning}\label{appendix:persistent_homology_and_ml}

In Appendix~\ref{appendix:persistent_homology_of_time_series}, we discussed how (extended or ordinary) persistent homology produces one or three diagrams associated with a time series.
In order to make these diagrams usable for machine learning, one must turn these diagrams into a format suitable for processing by machine learning algorithms, a procedure known as vectorization.
Vectorization methods for persistence diagrams abound, and we limit ourselves to reviewing that known as \emph{persistence image}, as introduced in~\citet{adams_persistence_images}.

Given a persistence diagram \(D=\left\{(b_{i},d_{i})\right\}_{i=1}^{n}\subseteq\RR^{2}\) consisting of (birth, death)-pairs, one starts out by discarding any points from \(D\) that have at least one infinite component.
One then applies the linear transformation \(\pi\colon\RR^{2}\to\RR^{2}\) given by \(\pi(x,y)\coloneqq(x, y-x)\); this is a shearing mapping the diagonal of the persistence diagram \(D\) to the horizontal axis of \(\RR^{2}\).
Define the \emph{persistence surface of \(\pi(D)\)} as
\begin{align*}
  \rho_{\pi(D)}\colon\RR^{2} & \to\RR                                                         \\
  (x,y)                      & \mapsto\sum_{d\in \pi(D)}w(d)\cdot\varphi_{d,\sigma^{2}}(x,y),
\end{align*}
where \(w\colon\RR^{2}\to\RR_{\geq 0}\) is a weighting function and \(\varphi_{d,\sigma^{2}}\) denotes the probability density function of a two-dimensional normalized Gaussian with mean \(d\in\RR^{2}\) and variance \(\sigma^{2}\in\RR_{>0}\).
In words, \(\rho_{\pi(D)}\) is obtained by placing a Gaussian at each point of the diagram \(\pi(D)\) (weighted according to the position of the point) and summing these up.
See Figure~\ref{fig:persistence_image} for an illustration of this process.

Finally, the \emph{persistence image} is obtained by discretizing a relevant subdomain of \(\rho_{\pi(D)}\) into a grid of pixels and assigning each pixel the integral of \(\rho_{\pi(D)}\) over that pixel, resulting in an \(n\times m\)-matrix, where \(n\) and \(m\) denote the height and width of the pixel grid, respectively.
The persistence image thus created depends on the parameters \(\sigma>0\) and \((n, m)\in\ZZ_{\geq 1}^{2}\); these are referred to as the \emph{bandwidth} and \emph{resolution}, respectively.
In practice, the persistence image is usually flattened into a vector of length \(nm\) before feeding it to machine learning algorithms.

We conclude this section by pointing out that, in practice, the weighting function \(w\) is often chosen to be \(w(x,y)=1\) or \(w(x,y)=y\) for \((x,y)\in\RR^{2}\).
In the former case, all points of a persistence diagram are weighted equally, whereas in the latter case, points of \(D\) are effectively weighted according to their lifetimes.


\section{Non-Horizontal Filtrations for Time Series}\label{appendix:non_horizontal_filtrations}

As discussed in Appendix~\ref{appendix:persistent_homology_of_time_series}, one can construct a filtration of the graph \(\Gamma(T)\subseteq\RR^{2}\) of a time series \(T\colon I\to\RR\) from a function \(f\colon\Gamma(T)\to\RR\).
Letting \(f\) be projection onto the vertical axis of \(\RR^{2}\) yields the well-known horizontal filtration of \(\Gamma(T)\).
Note that the horizontal filtration of a time series is sensitive only to the values of \(T\) at its local extrema, but not to the point in time at which they occur.
This motivates the definition and use of non-horizontal filtrations, that is, filtrations of \(T\) that explicitly depend on the time coordinate.

In the following, we define a general family of non-horizontal filtrations for time series, from which we will pick three explicit filtrations for our experiments.
To that end, let \(T\colon I\to\RR\) be a time series, and let \(F\colon J\to[0,1]\) be continuous and bijective, where \(J\subseteq\RR\) is some interval (possibly of infinite length).
Letting \(\hat{T}\) and \(\check{T}\) denote the global maximum and minimum values that \(T\) attains, respectively, we rescale and shift the function \(F\) to obtain
\begin{equation}\label{eq:def_of_G}
  G\coloneqq(\hat{T}-\check{T})F+\check{T}\colon J\to[\check{T},\hat{T}].
\end{equation}
By construction, the maximum and minimum values of \(G\) coincide with those of \(T\), and, moreover, \(G\) is continuous and bijective, too.
In particular, \(G\) is strictly monotonic.
Defining \(G_{h}(t)\coloneqq G(t-h)\), this implies that the graphs of \(G_{h}\) sweep out the strip \(\RR\times[\check{T},\hat{T}]\) as \(h\) ranges from \(-\infty\) to \(+\infty\).
Given a point \((t, x)\in\RR\times[\check{T},\hat{T}]\), this in turn guarantees that there exists a unique value \(h\in\RR\) such that \(G_{h}(t)=x\).
In particular, this applies to any point \((t_{0}, T(t_{0}))\in\Gamma(T)\), and we define \(f_{F}\colon\Gamma(T)\to\RR\) as the function that maps a point \((t_{0}, T(t_{0}))\in\Gamma(T)\) to the unique value of \(h\in\RR\) such that \(G_{h}(t_{0})=T(t_{0})\).
The function \(f_{F}\) thus defines a filtration of \(\Gamma(T)\), with respect to which we can compute (ordinary or extended) persistent homology of \(T\).
We call \(f_{F}\colon\Gamma(T)\to\RR\) the \emph{filtration associated to \(F\colon J\to[0,1]\)}.

\begin{example}[Associated filtration]\label{exp:associated_filtration}
  Let \(T\colon I\to\RR\) be a time series with global minimum and maximum values \(\check{T}\) and \(\hat{T}\), respectively, and consider the function defined by
  \begin{align*}
    F\colon[-1/2,1/2] & \to[0,1]       \\
    t                 & \mapsto t+1/2.
  \end{align*}
  Following the above, we have that \(G_{h}(t)=(\hat{T}-\check{T})(t+1/2-h)+\check{T}\).
  Given any point \((t_{0},T(t_{0}))\in\Gamma(T)\), one can check that \[
    G_{h}(t_{0})=T(t_{0})\Leftrightarrow h=t_{0}-\frac{T(t_{0})-\check{T}}{\hat{T}-\check{T}}+\frac{1}{2}.
  \]
  This value of \(h\) is, by definition, the filtration value \(f_{F}(t_{0},T(t_{0}))\).
  This filtration is obtained by sweeping out the strip \(\RR\times[\check{T},\hat{T}]\) with a line of slope \(\hat{T}-\check{T}\).
  Note that, unlike the horizontal filtration of \(T\), this filtration explicitly depends on the time-coordinate of a point in \(\Gamma(T)\).
\end{example}

With the above, we are now in a position to introduce the non-horizontal filtrations that we will use in our experiments.

\begin{definition}\label{def:sloped_sigmoid_arctan_filtrations}
  Let \(T\colon I\to\RR\) be a time series.
  Given a parameter \(c\in\RR\setminus\left\{0\right\}\), the \emph{sloped filtration of \(T\)} is the filtration associated to the function
  \begin{align*}
    \lambda_{c}\colon[-1/2c,1/2c] & \to[0,1]        \\
    t                             & \mapsto ct+1/2.
  \end{align*}
  The \emph{sigmoid filtration of \(T\)} is the filtration associated to the function
  \begin{align*}
    \sigma_{c}\colon\RR & \to[0,1]                       \\
    t                   & \mapsto\frac{1}{1+\exp(-4ct)}.
  \end{align*}
  The \emph{arctan filtration of \(T\)} is the filtration associated to the function
  \begin{align*}
    \tau_{c}\colon\RR & \to[0,1]                                         \\
    t                 & \mapsto\frac{1}{\pi}\arctan(c\pi t)+\frac{1}{2}.
  \end{align*}
\end{definition}

Crucially, note that the functions \(\lambda_{c}\), \(\sigma_{c}\) and \(\tau_{c}\) in the above definition are continuous and bijective, and hence are valid functions to construct an associated filtration of a time series from.

For an example illustrating the sloped filtration, see Example~\ref{exp:sloped_filtration}.
For the sigmoid and arctan filtrations of this time series, one replaces the sloped line by a curve described by the logistic and the arctan functions, respectively, and lets that curve move from the left to the right as before.

\begin{figure}
  \centering
  \def\svgscale{1}
  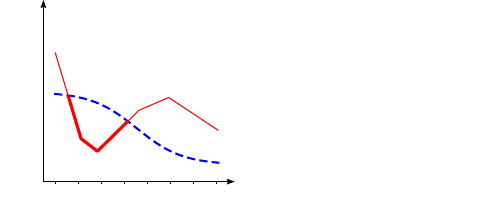
  \caption{Sweeping curves defined by the logistic (left) and arctan (right) functions.
    While these curves look very similar to the naked eye, their asymptotic behaviors are quite different.}
  \label{fig:time_series_example_sigmoid_and_arctan}
\end{figure}

The sloped filtration can be thought of as a tilted version of the usual horizontal filtration of a time series.
The idea behind the choice of the functions \(\sigma_{c}\) and \(\tau_{c}\) in Definition~\ref{def:sloped_sigmoid_arctan_filtrations} is that these functions---unlike \(\lambda_{c}\)---can ``look ahead'' when sweeping out the strip \(\RR\times[\check{T},\hat{T}]\subseteq\RR^{2}\) in which \(\Gamma(T)\) is supported; see Figure~\ref{fig:time_series_example_sigmoid_and_arctan} for illustrations of these filtrations.

While these filtrations look very similar to the naked eye, we choose to use them both in our experiments because the logistic and the arctan functions converge to their asymptotic extrema at quite different rates.

The computation from Example~\ref{exp:associated_filtration} generalizes to an explicit formula for the filtration level of a point of \(\Gamma(T)\) in terms of the function \(F\).

\begin{proposition}\label{prop:formula_explicit_filtration}
  Let \(T\colon I\to\RR\) be a time series with global minimum and maximum values \(\check{T}\) and \(\hat{T}\), respectively, and let \(F\colon J\to[0,1]\) be continuous and bijective, where \(J\subseteq\RR\) is an interval.
  Then the filtration level of a point \((t_{0},T(t_{0}))\in\Gamma(T)\) under the filtration associated to \(F\) is given by \[
    f_{F}(t_{0},T(t_{0}))=t_{0}-F^{-1}\left(\frac{T(t_{0})-\check{T}}{\hat{T}-\check{T}}\right).
  \]
\end{proposition}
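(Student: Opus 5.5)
The argument is a direct unwinding of the definition of \(f_{F}\). By construction, \(f_{F}(t_{0},T(t_{0}))\) is the unique \(h\in\RR\) with \(G_{h}(t_{0})=T(t_{0})\). Here \(G_{h}(t)=G(t-h)\), and \(G=(\hat{T}-\check{T})F+\check{T}\) as in~\eqref{eq:def_of_G}. So I will write out this defining equation and solve it for \(h\), exactly as in Example~\ref{exp:associated_filtration}, but with \(F\) now general.

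The steps are as follows.
\begin{enumerate}
  \item Substitute the definitions to rewrite \(G_{h}(t_{0})=T(t_{0})\) as
  \[
    (\hat{T}-\check{T})\,F(t_{0}-h)+\check{T}=T(t_{0}).
  \]
  \item Assume \(\hat{T}>\check{T}\), which is implicit in the statement since otherwise the formula is undefined. Subtract \(\check{T}\) and divide by \(\hat{T}-\check{T}\) to get
  \[
    F(t_{0}-h)=\frac{T(t_{0})-\check{T}}{\hat{T}-\check{T}}.
  \]
  \item Since \(\check{T}\leq T(t_{0})\leq\hat{T}\), the right-hand side lies in \([0,1]\). Because \(F\colon J\to[0,1]\) is bijective, \(F^{-1}\) is defined on all of \([0,1]\). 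Applying it shows that the equation holds if and only if
  \[
    t_{0}-h=F^{-1}\!\left(\frac{T(t_{0})-\check{T}}{\hat{T}-\check{T}}\right),
  \]
  where it is understood that \(t_{0}-h\) must lie in \(J\) for \(G_{h}(t_{0})\) to be defined.
  \item Rearranging gives \(h=t_{0}-F^{-1}\bigl((T(t_{0})-\check{T})/(\hat{T}-\check{T})\bigr)\). By uniqueness of \(h\) in the definition of \(f_{F}\), this is the filtration value.
\end{enumerate}

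The only point needing care is the domain bookkeeping in step 3. The equivalence must be an if-and-only-if, so that the solution obtained really is the unique \(h\) used in the definition, including existence. Both directions follow from bijectivity of \(F\): injectivity gives uniqueness of \(t_{0}-h\in J\), and surjectivity gives existence. Continuity of \(F\) is not actually needed for this computation; it only matters for \(f_{F}\) being a sensible filtration.
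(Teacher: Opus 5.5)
Your proposal is correct and follows essentially the same route as the paper's proof: substitute the definition of \(G_{h}\), rearrange to \(F(t_{0}-h)=\frac{T(t_{0})-\check{T}}{\hat{T}-\check{T}}\), note that this value lies in \([0,1]\), and invert \(F\). The paper leaves two points implicit that you make explicit: the assumption \(\hat{T}>\check{T}\), and the fact that existence and uniqueness of \(h\) follow from bijectivity of \(F\).
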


\begin{proof}
  By definition, the filtration level of \((t_{0},T(t_{0}))\in\Gamma(T)\) is given as the solution to \(G_{h}(t_{0})=T(t_{0})\), which, in turn, is equivalent to \((\hat{T}-\check{T})F(t_{0}-h)+\check{T}=T(t_{0})\).
  Rearranging, this is equivalent to \(F(t_{0}-h)=\frac{T(t_{0})-\check{T}}{\hat{T}-\check{T}}\).
  Note that \(\frac{T(t_{0})-\check{T}}{\hat{T}-\check{T}}\in[0,1]\), and hence the preceding equation is equivalent to \(h=t_{0}-F^{-1}\left(\frac{T(t_{0})-\check{T}}{\hat{T}-\check{T}}\right)\), as desired.
\end{proof}

This proposition allows us to provide explicit formulas for the sloped, sigmoid and arctan filtration of a time series.

\begin{corollary}\label{cor:formulas_sloped_sigmoid_arctan}
  Let \(T\colon I\to\RR\) be a time series with global minimum and maximum values \(\check{T}\) and \(\hat{T}\), respectively, and let \((t_{0},T(t_{0}))\in\Gamma(T)\).
  Given a parameter \(c\in\RR\setminus\left\{0\right\}\), the sloped, sigmoid and arctan filtrations of \(T\) are given by \[
    f_{\lambda_{c}}(t_{0},T(t_{0}))=t_{0}-\frac{1}{c}\left(\frac{T(t_{0})-\check{T}}{\hat{T}-\check{T}}-\frac{1}{2}\right),
  \] \[
    f_{\sigma_{c}}(t_{0},T(t_{0}))=t_{0}+\frac{1}{4c}\log\left(\frac{\hat{T}-T(t_{0})}{T(t_{0})-\check{T}}\right),
  \] and
  \[
    f_{\tau_{c}}(t_{0},T(t_{0}))=t_{0}-\frac{1}{c\pi}\tan\left(\pi\left(\frac{T(t_{0})-\check{T}}{\hat{T}-\check{T}}-\frac{1}{2}\right)\right),
  \] respectively.
\end{corollary}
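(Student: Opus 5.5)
The plan is to apply Proposition~\ref{prop:formula_explicit_filtration} directly to each of \(\lambda_{c}\), \(\sigma_{c}\) and \(\tau_{c}\). That proposition gives \(f_{F}(t_{0},T(t_{0}))=t_{0}-F^{-1}(u)\) with
\[
u\coloneqq\frac{T(t_{0})-\check{T}}{\hat{T}-\check{T}}\in[0,1].
\]
So the whole task is to compute the inverse functions \(\lambda_{c}^{-1}\), \(\sigma_{c}^{-1}\) and \(\tau_{c}^{-1}\) and substitute \(u\). The hypotheses of the proposition (continuity and bijectivity onto \([0,1]\)) were already noted after Definition~\ref{def:sloped_sigmoid_arctan_filtrations}, so they may be taken as given.

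\textbf{Sloped case.} Solving \(ct+1/2=u\) gives \(\lambda_{c}^{-1}(u)=\frac{1}{c}(u-\frac12)\), which yields the first formula immediately.

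\textbf{Sigmoid case.} Solving \(1/(1+\exp(-4ct))=u\) gives \(\exp(-4ct)=\frac{1-u}{u}\), so \(\sigma_{c}^{-1}(u)=-\frac{1}{4c}\log\frac{1-u}{u}\). Now substitute
\[
\frac{1-u}{u}=\frac{\hat{T}-T(t_{0})}{T(t_{0})-\check{T}},
\]
and the sign flips to give \(t_{0}+\frac{1}{4c}\log(\cdots)\), which is the second formula.

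\textbf{Arctan case.} Solving \(\frac{1}{\pi}\arctan(c\pi t)+\frac12=u\) gives \(\arctan(c\pi t)=\pi(u-\frac12)\in(-\pi/2,\pi/2)\), so \(\tau_{c}^{-1}(u)=\frac{1}{c\pi}\tan(\pi(u-\frac12))\). This gives the third formula.

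The only step needing care is the endpoint values \(u\in\{0,1\}\), that is, points where \(T\) attains its global extrema. Here \(\sigma_{c}\) and \(\tau_{c}\), which are defined on all of \(\RR\), do not actually attain \(0\) or \(1\). At these points the logarithm and tangent expressions are infinite, so the formulas should be read with the convention \(f=\pm\infty\) (equivalently, as limits in the extended reals). I would remark on this rather than treat it as a genuine obstruction, and state the formulas for \(u\in(0,1)\), with the extrema sent to \(\pm\infty\) according to the sign of \(c\).
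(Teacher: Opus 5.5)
Your proposal is correct and follows the paper's proof essentially verbatim. Like the paper, you invert each of \(\lambda_{c}\), \(\sigma_{c}\), \(\tau_{c}\) explicitly, substitute \(u=(T(t_{0})-\check{T})/(\hat{T}-\check{T})\) into Proposition~\ref{prop:formula_explicit_filtration}, and in the sigmoid case rewrite \((1-u)/u\) as \((\hat{T}-T(t_{0}))/(T(t_{0})-\check{T})\). Your observation that \(\sigma_{c}\) and \(\tau_{c}\) never attain \(0\) or \(1\) is slightly more careful than the paper's proof, which states the inverses on all of \([0,1]\); the formulas therefore hold literally only for \(u\in(0,1)\), with the extrema of \(T\) sent to \(\pm\infty\), and this agrees with the paper's later remark about padding the range.
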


\begin{proof}
  The sloped filtration is, by definition, the one associated to the function \(\lambda_{c}(t)=ct+1/2\), \(t\in[-1/2c,1/2c]\), whose inverse is given by \(\lambda_{c}^{-1}(x)=\frac{1}{c}(x-1/2)\), \(x\in[0,1]\).
  Hence, by Proposition~\ref{prop:formula_explicit_filtration}, the filtration is given by \[
    f_{\lambda_{c}}(t_{0},T(t_{0}))=t_{0}-\frac{1}{c}\left(\frac{T(t_{0})-\check{T}}{\hat{T}-\check{T}}-\frac{1}{2}\right),
  \] as claimed.

  For the sigmoid filtration, note that the inverse of \(\sigma_{c}\) is given by \(\sigma_{c}^{-1}(x)=-\frac{1}{4c}\log\left(\frac{1}{x}-1\right)\), \(x\in[0,1]\).
  From this, we obtain that \[
    f_{\sigma_{c}}(t_{0},T(t_{0}))=t_{0}+\frac{1}{4c}\log\left(\frac{\hat{T}-\check{T}}{T(t_{0})-\check{T}}-1\right).
  \]
  Rearranging the argument of the logarithm yields the claimed formula.

  Finally, for the arctan filtration, note that the inverse of \(\tau_{c}\) is given by \(\tau_{c}^{-1}(x)=\frac{1}{c\pi}\tan\left(\pi(x-1/2)\right)\), \(x\in[0,1]\).
  From this, we obtain that \[
    f_{\tau_{c}}(t_{0},T(t_{0}))=t_{0}-\frac{1}{c\pi}\tan\left(\pi\left(\frac{T(t_{0})-\check{T}}{\hat{T}-\check{T}}-\frac{1}{2}\right)\right),
  \] as claimed.
\end{proof}

We conclude this section with a couple of remarks.
\begin{enumerate}
  \item We have that \(\lambda_{c}'(0)=\sigma_{c}'(0)=\tau_{c}'(0)=c\), so that \(c\) can be regarded as a parameter that controls the slope of these functions in a consistent manner.
        In practice, the parameter \(c\in\RR\setminus\left\{0\right\}\) is a hyperparameter that can be tuned according to the problem setting.
  \item As a consequence of Corollary~\ref{cor:formulas_sloped_sigmoid_arctan}, we have that \(f_{\sigma_{c}}(t,T(t))=f_{\tau_{c}}(t,T(t))=\pm\infty\) whenever \(t\in I\) is such that \(T(t)\in\{\check{T},\hat{T}\}\).
        To avoid infinite filtration values in practice, we ``pad'' the range of the time series \(T\) by a percentage of its actual range.
        This amounts to replacing the values of \(\check{T}\) and \(\hat{T}\) in Equation~\eqref{eq:def_of_G} by \(\check{T}-\varepsilon(\hat{T}-\check{T})\) and \(\hat{T}+\varepsilon(\hat{T}-\check{T})\), respectively, for some small value of \(\varepsilon>0\).
\end{enumerate}


\section{Experiment Results}\label{appendix:experiment_results}

In Table~\ref{table:results_roc_auc}, we report for each model the mean and standard deviation of the ROC AUC across all \(k\) folds (where \(k=5\) for trial-level aggregation and \(k=10\) for reader-level aggregation).

\input{tables/table_results_roc_auc}

We denote by ``TSH'' and ``BL'' the models using time series homology and the baseline models, respectively.
Moreover, we use subscripts to denote the filtration used in the topological models, and to denote the name of the baseline models.
For example, the model name ``BL\textsubscript{Bjö}+TSH\textsubscript{horizontal}'' denotes the hybrid model combining the features obtained via the baseline from~\citet{bjornsdottir2023dyslexia} with those obtained via the TSH-model using the horizontal filtration.
For the hybrid and TSH-models, we subdivide the result according to use of ordinary and extended persistence.
We highlight in bold the best performing model in each of the four settings resulting from including/excluding L2-readers and aggregating on trial/reader-level.


\end{document}

%% file: tables/table_fixation_sequence.tex
\begin{table}
  \caption{An example of a fixation sequence.}
  \label{table:fixation_sequence}
  \centering
  \begin{tabular}{rrr}
    \toprule
    Onset [ms] & \(x\)-coordinate [px] & \(y\)-coordinate [px] \\
    \midrule
    0          & 963.1                 & 533.0                 \\
    244        & 210.7                 & 95.4                  \\
    366        & 100.0                 & 130.8                 \\
    726        & 457.2                 & 124.3                 \\
    984        & 569.6                 & 120.7                 \\
    1415       & 281.6                 & 136.6                 \\
    \bottomrule
  \end{tabular}
\end{table}

%% file: figures/time_series_example.pdf_tex
\begingroup%
  \makeatletter%
  \providecommand\color[2][]{%
    \errmessage{(Inkscape) Color is used for the text in Inkscape, but the package 'color.sty' is not loaded}%
    \renewcommand\color[2][]{}%
  }%
  \providecommand\transparent[1]{%
    \errmessage{(Inkscape) Transparency is used (non-zero) for the text in Inkscape, but the package 'transparent.sty' is not loaded}%
    \renewcommand\transparent[1]{}%
  }%
  \providecommand\rotatebox[2]{#2}%
  \newcommand*\fsize{\dimexpr\f@size pt\relax}%
  \newcommand*\lineheight[1]{\fontsize{\fsize}{#1\fsize}\selectfont}%
  \ifx\svgwidth\undefined%
    \setlength{\unitlength}{506.29501343bp}%
    \ifx\svgscale\undefined%
      \relax%
    \else%
      \setlength{\unitlength}{\unitlength * \real{\svgscale}}%
    \fi%
  \else%
    \setlength{\unitlength}{\svgwidth}%
  \fi%
  \global\let\svgwidth\undefined%
  \global\let\svgscale\undefined%
  \makeatother%
  \begin{picture}(1,0.23248929)%
    \lineheight{1}%
    \setlength\tabcolsep{0pt}%
    \put(0,0){\includegraphics[width=\unitlength,page=1]{time_series_example.pdf}}%
    \put(0.05368082,0.02326765){\color[rgb]{0,0,0}\makebox(0,0)[t]{\lineheight{1.25}\smash{\begin{tabular}[t]{c}$0$\end{tabular}}}}%
    \put(0.07784395,0.02326765){\color[rgb]{0,0,0}\makebox(0,0)[t]{\lineheight{1.25}\smash{\begin{tabular}[t]{c}$2$\end{tabular}}}}%
    \put(0.102007,0.02326765){\color[rgb]{0,0,0}\makebox(0,0)[t]{\lineheight{1.25}\smash{\begin{tabular}[t]{c}$4$\end{tabular}}}}%
    \put(0.12617002,0.02326765){\color[rgb]{0,0,0}\makebox(0,0)[t]{\lineheight{1.25}\smash{\begin{tabular}[t]{c}$6$\end{tabular}}}}%
    \put(0.15033313,0.02326765){\color[rgb]{0,0,0}\makebox(0,0)[t]{\lineheight{1.25}\smash{\begin{tabular}[t]{c}$8$\end{tabular}}}}%
    \put(0.17449621,0.02326765){\color[rgb]{0,0,0}\makebox(0,0)[t]{\lineheight{1.25}\smash{\begin{tabular}[t]{c}$10$\end{tabular}}}}%
    \put(0.19865933,0.02326765){\color[rgb]{0,0,0}\makebox(0,0)[t]{\lineheight{1.25}\smash{\begin{tabular}[t]{c}$12$\end{tabular}}}}%
    \put(0.22282236,0.02326765){\color[rgb]{0,0,0}\makebox(0,0)[t]{\lineheight{1.25}\smash{\begin{tabular}[t]{c}$14$\end{tabular}}}}%
    \put(0,0){\includegraphics[width=\unitlength,page=2]{time_series_example.pdf}}%
    \put(0.03597745,0.05788032){\color[rgb]{0,0,0}\makebox(0,0)[rt]{\lineheight{1.25}\smash{\begin{tabular}[t]{r}$0$\end{tabular}}}}%
    \put(0.03597745,0.08204342){\color[rgb]{0,0,0}\makebox(0,0)[rt]{\lineheight{1.25}\smash{\begin{tabular}[t]{r}$2$\end{tabular}}}}%
    \put(0.03597745,0.10620651){\color[rgb]{0,0,0}\makebox(0,0)[rt]{\lineheight{1.25}\smash{\begin{tabular}[t]{r}$4$\end{tabular}}}}%
    \put(0.03597745,0.13036959){\color[rgb]{0,0,0}\makebox(0,0)[rt]{\lineheight{1.25}\smash{\begin{tabular}[t]{r}$6$\end{tabular}}}}%
    \put(0.03597745,0.15453267){\color[rgb]{0,0,0}\makebox(0,0)[rt]{\lineheight{1.25}\smash{\begin{tabular}[t]{r}$8$\end{tabular}}}}%
    \put(0.03597745,0.17869576){\color[rgb]{0,0,0}\makebox(0,0)[rt]{\lineheight{1.25}\smash{\begin{tabular}[t]{r}$10$\end{tabular}}}}%
    \put(0.03597745,0.20285884){\color[rgb]{0,0,0}\makebox(0,0)[rt]{\lineheight{1.25}\smash{\begin{tabular}[t]{r}$12$\end{tabular}}}}%
    \put(0.1382516,0.0031087){\color[rgb]{0,0,0}\makebox(0,0)[t]{\lineheight{1.25}\smash{\begin{tabular}[t]{c}onset [100 $\times$ ms]\end{tabular}}}}%
    \put(0.00473307,0.1341462){\color[rgb]{0,0,0}\rotatebox{90}{\makebox(0,0)[t]{\lineheight{1.25}\smash{\begin{tabular}[t]{c}x-coord. [100 $\times$ px]\end{tabular}}}}}%
    \put(0,0){\includegraphics[width=\unitlength,page=3]{time_series_example.pdf}}%
    \put(0.30665519,0.02133618){\color[rgb]{0,0,0}\makebox(0,0)[t]{\lineheight{1.25}\smash{\begin{tabular}[t]{c}$0$\end{tabular}}}}%
    \put(0.3308183,0.02133618){\color[rgb]{0,0,0}\makebox(0,0)[t]{\lineheight{1.25}\smash{\begin{tabular}[t]{c}$2$\end{tabular}}}}%
    \put(0.35498142,0.02133618){\color[rgb]{0,0,0}\makebox(0,0)[t]{\lineheight{1.25}\smash{\begin{tabular}[t]{c}$4$\end{tabular}}}}%
    \put(0.37914445,0.02133618){\color[rgb]{0,0,0}\makebox(0,0)[t]{\lineheight{1.25}\smash{\begin{tabular}[t]{c}$6$\end{tabular}}}}%
    \put(0.40330749,0.02133618){\color[rgb]{0,0,0}\makebox(0,0)[t]{\lineheight{1.25}\smash{\begin{tabular}[t]{c}$8$\end{tabular}}}}%
    \put(0.42747061,0.02133618){\color[rgb]{0,0,0}\makebox(0,0)[t]{\lineheight{1.25}\smash{\begin{tabular}[t]{c}$10$\end{tabular}}}}%
    \put(0.45163372,0.02133618){\color[rgb]{0,0,0}\makebox(0,0)[t]{\lineheight{1.25}\smash{\begin{tabular}[t]{c}$12$\end{tabular}}}}%
    \put(0.47579675,0.02133618){\color[rgb]{0,0,0}\makebox(0,0)[t]{\lineheight{1.25}\smash{\begin{tabular}[t]{c}$14$\end{tabular}}}}%
    \put(0,0){\includegraphics[width=\unitlength,page=4]{time_series_example.pdf}}%
    \put(0.28895187,0.05788032){\color[rgb]{0,0,0}\makebox(0,0)[rt]{\lineheight{1.25}\smash{\begin{tabular}[t]{r}$0$\end{tabular}}}}%
    \put(0.28895187,0.08204342){\color[rgb]{0,0,0}\makebox(0,0)[rt]{\lineheight{1.25}\smash{\begin{tabular}[t]{r}$2$\end{tabular}}}}%
    \put(0.28895187,0.10620651){\color[rgb]{0,0,0}\makebox(0,0)[rt]{\lineheight{1.25}\smash{\begin{tabular}[t]{r}$4$\end{tabular}}}}%
    \put(0.28895187,0.13036959){\color[rgb]{0,0,0}\makebox(0,0)[rt]{\lineheight{1.25}\smash{\begin{tabular}[t]{r}$6$\end{tabular}}}}%
    \put(0.28895187,0.15453267){\color[rgb]{0,0,0}\makebox(0,0)[rt]{\lineheight{1.25}\smash{\begin{tabular}[t]{r}$8$\end{tabular}}}}%
    \put(0.28895187,0.17869576){\color[rgb]{0,0,0}\makebox(0,0)[rt]{\lineheight{1.25}\smash{\begin{tabular}[t]{r}$10$\end{tabular}}}}%
    \put(0.28895187,0.20285884){\color[rgb]{0,0,0}\makebox(0,0)[rt]{\lineheight{1.25}\smash{\begin{tabular}[t]{r}$12$\end{tabular}}}}%
    \put(0.39122596,0.00117727){\color[rgb]{0,0,0}\makebox(0,0)[t]{\lineheight{1.25}\smash{\begin{tabular}[t]{c}onset [100 $\times$ ms]\end{tabular}}}}%
    \put(0.25770745,0.13414624){\color[rgb]{0,0,0}\rotatebox{90}{\makebox(0,0)[t]{\lineheight{1.25}\smash{\begin{tabular}[t]{c}x-coord. [100 $\times$ px]\end{tabular}}}}}%
    \put(0,0){\includegraphics[width=\unitlength,page=5]{time_series_example.pdf}}%
    \put(0.4004211,0.05767386){\color[rgb]{0,0,0}\makebox(0,0)[t]{\lineheight{1.25}\smash{\begin{tabular}[t]{c}birth\end{tabular}}}}%
    \put(0,0){\includegraphics[width=\unitlength,page=6]{time_series_example.pdf}}%
    \put(0.67860495,0.08068905){\color[rgb]{0,0,0}\makebox(0,0)[t]{\lineheight{1.25}\smash{\begin{tabular}[t]{c}birth\end{tabular}}}}%
    \put(0,0){\includegraphics[width=\unitlength,page=7]{time_series_example.pdf}}%
    \put(0.55971832,0.02133618){\color[rgb]{0,0,0}\makebox(0,0)[t]{\lineheight{1.25}\smash{\begin{tabular}[t]{c}$0$\end{tabular}}}}%
    \put(0.58388148,0.02133618){\color[rgb]{0,0,0}\makebox(0,0)[t]{\lineheight{1.25}\smash{\begin{tabular}[t]{c}$2$\end{tabular}}}}%
    \put(0.60804455,0.02133618){\color[rgb]{0,0,0}\makebox(0,0)[t]{\lineheight{1.25}\smash{\begin{tabular}[t]{c}$4$\end{tabular}}}}%
    \put(0.63220762,0.02133618){\color[rgb]{0,0,0}\makebox(0,0)[t]{\lineheight{1.25}\smash{\begin{tabular}[t]{c}$6$\end{tabular}}}}%
    \put(0.65637069,0.02133618){\color[rgb]{0,0,0}\makebox(0,0)[t]{\lineheight{1.25}\smash{\begin{tabular}[t]{c}$8$\end{tabular}}}}%
    \put(0.68053385,0.02133618){\color[rgb]{0,0,0}\makebox(0,0)[t]{\lineheight{1.25}\smash{\begin{tabular}[t]{c}$10$\end{tabular}}}}%
    \put(0.70469683,0.02133618){\color[rgb]{0,0,0}\makebox(0,0)[t]{\lineheight{1.25}\smash{\begin{tabular}[t]{c}$12$\end{tabular}}}}%
    \put(0.7288599,0.02133618){\color[rgb]{0,0,0}\makebox(0,0)[t]{\lineheight{1.25}\smash{\begin{tabular}[t]{c}$14$\end{tabular}}}}%
    \put(0,0){\includegraphics[width=\unitlength,page=8]{time_series_example.pdf}}%
    \put(0.54201509,0.05788032){\color[rgb]{0,0,0}\makebox(0,0)[rt]{\lineheight{1.25}\smash{\begin{tabular}[t]{r}$0$\end{tabular}}}}%
    \put(0.54201509,0.08204342){\color[rgb]{0,0,0}\makebox(0,0)[rt]{\lineheight{1.25}\smash{\begin{tabular}[t]{r}$2$\end{tabular}}}}%
    \put(0.54201509,0.10620651){\color[rgb]{0,0,0}\makebox(0,0)[rt]{\lineheight{1.25}\smash{\begin{tabular}[t]{r}$4$\end{tabular}}}}%
    \put(0.54201509,0.13036959){\color[rgb]{0,0,0}\makebox(0,0)[rt]{\lineheight{1.25}\smash{\begin{tabular}[t]{r}$6$\end{tabular}}}}%
    \put(0.54201509,0.15453267){\color[rgb]{0,0,0}\makebox(0,0)[rt]{\lineheight{1.25}\smash{\begin{tabular}[t]{r}$8$\end{tabular}}}}%
    \put(0.54201509,0.17869576){\color[rgb]{0,0,0}\makebox(0,0)[rt]{\lineheight{1.25}\smash{\begin{tabular}[t]{r}$10$\end{tabular}}}}%
    \put(0.54201509,0.20285883){\color[rgb]{0,0,0}\makebox(0,0)[rt]{\lineheight{1.25}\smash{\begin{tabular}[t]{r}$12$\end{tabular}}}}%
    \put(0.64440986,0.00117727){\color[rgb]{0,0,0}\makebox(0,0)[t]{\lineheight{1.25}\smash{\begin{tabular}[t]{c}onset [100 $\times$ ms]\end{tabular}}}}%
    \put(0.51077058,0.13414624){\color[rgb]{0,0,0}\rotatebox{90}{\makebox(0,0)[t]{\lineheight{1.25}\smash{\begin{tabular}[t]{c}x-coord. [100 $\times$ px]\end{tabular}}}}}%
    \put(0,0){\includegraphics[width=\unitlength,page=9]{time_series_example.pdf}}%
    \put(0.81269282,0.02133618){\color[rgb]{0,0,0}\makebox(0,0)[t]{\lineheight{1.25}\smash{\begin{tabular}[t]{c}$0$\end{tabular}}}}%
    \put(0.83685589,0.02133618){\color[rgb]{0,0,0}\makebox(0,0)[t]{\lineheight{1.25}\smash{\begin{tabular}[t]{c}$2$\end{tabular}}}}%
    \put(0.86101896,0.02133618){\color[rgb]{0,0,0}\makebox(0,0)[t]{\lineheight{1.25}\smash{\begin{tabular}[t]{c}$4$\end{tabular}}}}%
    \put(0.88518195,0.02133618){\color[rgb]{0,0,0}\makebox(0,0)[t]{\lineheight{1.25}\smash{\begin{tabular}[t]{c}$6$\end{tabular}}}}%
    \put(0.90934511,0.02133618){\color[rgb]{0,0,0}\makebox(0,0)[t]{\lineheight{1.25}\smash{\begin{tabular}[t]{c}$8$\end{tabular}}}}%
    \put(0.93350818,0.02133618){\color[rgb]{0,0,0}\makebox(0,0)[t]{\lineheight{1.25}\smash{\begin{tabular}[t]{c}$10$\end{tabular}}}}%
    \put(0.95767125,0.02133618){\color[rgb]{0,0,0}\makebox(0,0)[t]{\lineheight{1.25}\smash{\begin{tabular}[t]{c}$12$\end{tabular}}}}%
    \put(0.98183432,0.02133618){\color[rgb]{0,0,0}\makebox(0,0)[t]{\lineheight{1.25}\smash{\begin{tabular}[t]{c}$14$\end{tabular}}}}%
    \put(0,0){\includegraphics[width=\unitlength,page=10]{time_series_example.pdf}}%
    \put(0.79498941,0.05788032){\color[rgb]{0,0,0}\makebox(0,0)[rt]{\lineheight{1.25}\smash{\begin{tabular}[t]{r}$0$\end{tabular}}}}%
    \put(0.79498941,0.08204342){\color[rgb]{0,0,0}\makebox(0,0)[rt]{\lineheight{1.25}\smash{\begin{tabular}[t]{r}$2$\end{tabular}}}}%
    \put(0.79498941,0.10620651){\color[rgb]{0,0,0}\makebox(0,0)[rt]{\lineheight{1.25}\smash{\begin{tabular}[t]{r}$4$\end{tabular}}}}%
    \put(0.79498941,0.13036959){\color[rgb]{0,0,0}\makebox(0,0)[rt]{\lineheight{1.25}\smash{\begin{tabular}[t]{r}$6$\end{tabular}}}}%
    \put(0.79498941,0.15453267){\color[rgb]{0,0,0}\makebox(0,0)[rt]{\lineheight{1.25}\smash{\begin{tabular}[t]{r}$8$\end{tabular}}}}%
    \put(0.79498941,0.17869576){\color[rgb]{0,0,0}\makebox(0,0)[rt]{\lineheight{1.25}\smash{\begin{tabular}[t]{r}$10$\end{tabular}}}}%
    \put(0.79498941,0.20285883){\color[rgb]{0,0,0}\makebox(0,0)[rt]{\lineheight{1.25}\smash{\begin{tabular}[t]{r}$12$\end{tabular}}}}%
    \put(0.89738436,0.00117727){\color[rgb]{0,0,0}\makebox(0,0)[t]{\lineheight{1.25}\smash{\begin{tabular}[t]{c}onset [100 $\times$ ms]\end{tabular}}}}%
    \put(0.76374508,0.13414624){\color[rgb]{0,0,0}\rotatebox{90}{\makebox(0,0)[t]{\lineheight{1.25}\smash{\begin{tabular}[t]{c}x-coord. [100 $\times$ px]\end{tabular}}}}}%
    \put(0,0){\includegraphics[width=\unitlength,page=11]{time_series_example.pdf}}%
    \put(0.93153289,0.08816554){\color[rgb]{0,0,0}\makebox(0,0)[t]{\lineheight{1.25}\smash{\begin{tabular}[t]{c}death\end{tabular}}}}%
  \end{picture}%
\endgroup%

%% file: figures/persistence_diagram_example.pdf_tex
\begingroup%
  \makeatletter%
  \providecommand\color[2][]{%
    \errmessage{(Inkscape) Color is used for the text in Inkscape, but the package 'color.sty' is not loaded}%
    \renewcommand\color[2][]{}%
  }%
  \providecommand\transparent[1]{%
    \errmessage{(Inkscape) Transparency is used (non-zero) for the text in Inkscape, but the package 'transparent.sty' is not loaded}%
    \renewcommand\transparent[1]{}%
  }%
  \providecommand\rotatebox[2]{#2}%
  \newcommand*\fsize{\dimexpr\f@size pt\relax}%
  \newcommand*\lineheight[1]{\fontsize{\fsize}{#1\fsize}\selectfont}%
  \ifx\svgwidth\undefined%
    \setlength{\unitlength}{156.74588013bp}%
    \ifx\svgscale\undefined%
      \relax%
    \else%
      \setlength{\unitlength}{\unitlength * \real{\svgscale}}%
    \fi%
  \else%
    \setlength{\unitlength}{\svgwidth}%
  \fi%
  \global\let\svgwidth\undefined%
  \global\let\svgscale\undefined%
  \makeatother%
  \begin{picture}(1,0.92427638)%
    \lineheight{1}%
    \setlength\tabcolsep{0pt}%
    \put(0,0){\includegraphics[width=\unitlength,page=1]{persistence_diagram_example.pdf}}%
    \put(0.24061596,0.05284221){\color[rgb]{0,0,0}\makebox(0,0)[t]{\lineheight{1.25}\smash{\begin{tabular}[t]{c}$0$\end{tabular}}}}%
    \put(0.36035833,0.05284221){\color[rgb]{0,0,0}\makebox(0,0)[t]{\lineheight{1.25}\smash{\begin{tabular}[t]{c}$200$\end{tabular}}}}%
    \put(0.48011917,0.05284221){\color[rgb]{0,0,0}\makebox(0,0)[t]{\lineheight{1.25}\smash{\begin{tabular}[t]{c}$400$\end{tabular}}}}%
    \put(0.59986165,0.05284221){\color[rgb]{0,0,0}\makebox(0,0)[t]{\lineheight{1.25}\smash{\begin{tabular}[t]{c}$600$\end{tabular}}}}%
    \put(0.71962238,0.05284221){\color[rgb]{0,0,0}\makebox(0,0)[t]{\lineheight{1.25}\smash{\begin{tabular}[t]{c}$800$\end{tabular}}}}%
    \put(0.83936482,0.05284221){\color[rgb]{0,0,0}\makebox(0,0)[t]{\lineheight{1.25}\smash{\begin{tabular}[t]{c}$1000$\end{tabular}}}}%
    \put(0.95912567,0.05284221){\color[rgb]{0,0,0}\makebox(0,0)[t]{\lineheight{1.25}\smash{\begin{tabular}[t]{c}$1200$\end{tabular}}}}%
    \put(0,0){\includegraphics[width=\unitlength,page=2]{persistence_diagram_example.pdf}}%
    \put(0.15660459,0.15989981){\color[rgb]{0,0,0}\makebox(0,0)[rt]{\lineheight{1.25}\smash{\begin{tabular}[t]{r}$0$\end{tabular}}}}%
    \put(0.15660459,0.27964204){\color[rgb]{0,0,0}\makebox(0,0)[rt]{\lineheight{1.25}\smash{\begin{tabular}[t]{r}$200$\end{tabular}}}}%
    \put(0.15660459,0.39940284){\color[rgb]{0,0,0}\makebox(0,0)[rt]{\lineheight{1.25}\smash{\begin{tabular}[t]{r}$400$\end{tabular}}}}%
    \put(0.15660459,0.51914525){\color[rgb]{0,0,0}\makebox(0,0)[rt]{\lineheight{1.25}\smash{\begin{tabular}[t]{r}$600$\end{tabular}}}}%
    \put(0.15660459,0.6389062){\color[rgb]{0,0,0}\makebox(0,0)[rt]{\lineheight{1.25}\smash{\begin{tabular}[t]{r}$800$\end{tabular}}}}%
    \put(0.15660459,0.75864855){\color[rgb]{0,0,0}\makebox(0,0)[rt]{\lineheight{1.25}\smash{\begin{tabular}[t]{r}$1000$\end{tabular}}}}%
    \put(0.15660459,0.8784094){\color[rgb]{0,0,0}\makebox(0,0)[rt]{\lineheight{1.25}\smash{\begin{tabular}[t]{r}$1200$\end{tabular}}}}%
    \put(0.56993064,0.00039272){\color[rgb]{0,0,0}\makebox(0,0)[t]{\lineheight{1.25}\smash{\begin{tabular}[t]{c}birth\end{tabular}}}}%
    \put(0.01971313,0.49699821){\color[rgb]{0,0,0}\rotatebox{90}{\makebox(0,0)[t]{\lineheight{1.25}\smash{\begin{tabular}[t]{c}death\end{tabular}}}}}%
    \put(0,0){\includegraphics[width=\unitlength,page=3]{persistence_diagram_example.pdf}}%
  \end{picture}%
\endgroup%

%% file: figures/persistence_image.pdf_tex
\begingroup%
  \makeatletter%
  \providecommand\color[2][]{%
    \errmessage{(Inkscape) Color is used for the text in Inkscape, but the package 'color.sty' is not loaded}%
    \renewcommand\color[2][]{}%
  }%
  \providecommand\transparent[1]{%
    \errmessage{(Inkscape) Transparency is used (non-zero) for the text in Inkscape, but the package 'transparent.sty' is not loaded}%
    \renewcommand\transparent[1]{}%
  }%
  \providecommand\rotatebox[2]{#2}%
  \newcommand*\fsize{\dimexpr\f@size pt\relax}%
  \newcommand*\lineheight[1]{\fontsize{\fsize}{#1\fsize}\selectfont}%
  \ifx\svgwidth\undefined%
    \setlength{\unitlength}{384.26293945bp}%
    \ifx\svgscale\undefined%
      \relax%
    \else%
      \setlength{\unitlength}{\unitlength * \real{\svgscale}}%
    \fi%
  \else%
    \setlength{\unitlength}{\svgwidth}%
  \fi%
  \global\let\svgwidth\undefined%
  \global\let\svgscale\undefined%
  \makeatother%
  \begin{picture}(1,0.23959679)%
    \lineheight{1}%
    \setlength\tabcolsep{0pt}%
    \put(0,0){\includegraphics[width=\unitlength,page=1]{persistence_image.pdf}}%
    \put(0.12832563,0.00011017){\color[rgb]{0.16470588,0.24705882,0.37254902}\makebox(0,0)[t]{\lineheight{1.25}\smash{\begin{tabular}[t]{c}birth\end{tabular}}}}%
    \put(0.00543731,0.13224843){\color[rgb]{0.16470588,0.24705882,0.37254902}\rotatebox{90}{\makebox(0,0)[t]{\lineheight{1.25}\smash{\begin{tabular}[t]{c}death\end{tabular}}}}}%
    \put(0,0){\includegraphics[width=\unitlength,page=2]{persistence_image.pdf}}%
    \put(0.52100437,0.00007692){\color[rgb]{0.16470588,0.24705882,0.37254902}\makebox(0,0)[t]{\lineheight{1.25}\smash{\begin{tabular}[t]{c}birth\end{tabular}}}}%
    \put(0.39809803,0.13223274){\color[rgb]{0.16470588,0.24705882,0.37254902}\rotatebox{90}{\makebox(0,0)[t]{\lineheight{1.25}\smash{\begin{tabular}[t]{c}death--birth\end{tabular}}}}}%
    \put(0,0){\includegraphics[width=\unitlength,page=3]{persistence_image.pdf}}%
    \put(0.24595961,0.14787714){\color[rgb]{0,0,0}\makebox(0,0)[lt]{\lineheight{0}\smash{\begin{tabular}[t]{l}$D\mapsto \pi(D)$\end{tabular}}}}%
    \put(0.62460659,0.14787714){\color[rgb]{0,0,0}\makebox(0,0)[lt]{\lineheight{0}\smash{\begin{tabular}[t]{l}$\pi(D)\mapsto \rho_{\pi(D)}$\end{tabular}}}}%
    \put(0,0){\includegraphics[width=\unitlength,page=4]{persistence_image.pdf}}%
  \end{picture}%
\endgroup%

%% file: figures/time_series_example_sloped.pdf_tex
\begingroup%
  \makeatletter%
  \providecommand\color[2][]{%
    \errmessage{(Inkscape) Color is used for the text in Inkscape, but the package 'color.sty' is not loaded}%
    \renewcommand\color[2][]{}%
  }%
  \providecommand\transparent[1]{%
    \errmessage{(Inkscape) Transparency is used (non-zero) for the text in Inkscape, but the package 'transparent.sty' is not loaded}%
    \renewcommand\transparent[1]{}%
  }%
  \providecommand\rotatebox[2]{#2}%
  \newcommand*\fsize{\dimexpr\f@size pt\relax}%
  \newcommand*\lineheight[1]{\fontsize{\fsize}{#1\fsize}\selectfont}%
  \ifx\svgwidth\undefined%
    \setlength{\unitlength}{241.14749146bp}%
    \ifx\svgscale\undefined%
      \relax%
    \else%
      \setlength{\unitlength}{\unitlength * \real{\svgscale}}%
    \fi%
  \else%
    \setlength{\unitlength}{\svgwidth}%
  \fi%
  \global\let\svgwidth\undefined%
  \global\let\svgscale\undefined%
  \makeatother%
  \begin{picture}(1,0.43011747)%
    \lineheight{1}%
    \setlength\tabcolsep{0pt}%
    \put(0,0){\includegraphics[width=\unitlength,page=1]{time_series_example_sloped.pdf}}%
    \put(0.10979588,0.0340393){\color[rgb]{0,0,0}\makebox(0,0)[t]{\lineheight{1.25}\smash{\begin{tabular}[t]{c}$0$\end{tabular}}}}%
    \put(0.15572137,0.0340393){\color[rgb]{0,0,0}\makebox(0,0)[t]{\lineheight{1.25}\smash{\begin{tabular}[t]{c}$2$\end{tabular}}}}%
    \put(0.20164669,0.0340393){\color[rgb]{0,0,0}\makebox(0,0)[t]{\lineheight{1.25}\smash{\begin{tabular}[t]{c}$4$\end{tabular}}}}%
    \put(0.24757206,0.0340393){\color[rgb]{0,0,0}\makebox(0,0)[t]{\lineheight{1.25}\smash{\begin{tabular}[t]{c}$6$\end{tabular}}}}%
    \put(0.29349739,0.0340393){\color[rgb]{0,0,0}\makebox(0,0)[t]{\lineheight{1.25}\smash{\begin{tabular}[t]{c}$8$\end{tabular}}}}%
    \put(0.33942275,0.0340393){\color[rgb]{0,0,0}\makebox(0,0)[t]{\lineheight{1.25}\smash{\begin{tabular}[t]{c}$10$\end{tabular}}}}%
    \put(0.38534822,0.0340393){\color[rgb]{0,0,0}\makebox(0,0)[t]{\lineheight{1.25}\smash{\begin{tabular}[t]{c}$12$\end{tabular}}}}%
    \put(0.43127362,0.0340393){\color[rgb]{0,0,0}\makebox(0,0)[t]{\lineheight{1.25}\smash{\begin{tabular}[t]{c}$14$\end{tabular}}}}%
    \put(0,0){\includegraphics[width=\unitlength,page=2]{time_series_example_sloped.pdf}}%
    \put(0.0761481,0.09696427){\color[rgb]{0,0,0}\makebox(0,0)[rt]{\lineheight{1.25}\smash{\begin{tabular}[t]{r}$0$\end{tabular}}}}%
    \put(0.0761481,0.1428588){\color[rgb]{0,0,0}\makebox(0,0)[rt]{\lineheight{1.25}\smash{\begin{tabular}[t]{r}$2$\end{tabular}}}}%
    \put(0.0761481,0.18875342){\color[rgb]{0,0,0}\makebox(0,0)[rt]{\lineheight{1.25}\smash{\begin{tabular}[t]{r}$4$\end{tabular}}}}%
    \put(0.0761481,0.23464797){\color[rgb]{0,0,0}\makebox(0,0)[rt]{\lineheight{1.25}\smash{\begin{tabular}[t]{r}$6$\end{tabular}}}}%
    \put(0.0761481,0.28054251){\color[rgb]{0,0,0}\makebox(0,0)[rt]{\lineheight{1.25}\smash{\begin{tabular}[t]{r}$8$\end{tabular}}}}%
    \put(0.0761481,0.32643704){\color[rgb]{0,0,0}\makebox(0,0)[rt]{\lineheight{1.25}\smash{\begin{tabular}[t]{r}$10$\end{tabular}}}}%
    \put(0.0761481,0.37233163){\color[rgb]{0,0,0}\makebox(0,0)[rt]{\lineheight{1.25}\smash{\begin{tabular}[t]{r}$12$\end{tabular}}}}%
    \put(0.27053473,0.00223602){\color[rgb]{0,0,0}\makebox(0,0)[t]{\lineheight{1.25}\smash{\begin{tabular}[t]{c}onset [100 $\times$ ms]\end{tabular}}}}%
    \put(0.00899588,0.24182121){\color[rgb]{0,0,0}\rotatebox{90}{\makebox(0,0)[t]{\lineheight{1.25}\smash{\begin{tabular}[t]{c}x-coord. [100 $\times$ px]\end{tabular}}}}}%
    \put(0,0){\includegraphics[width=\unitlength,page=3]{time_series_example_sloped.pdf}}%
    \put(0.64249708,0.03403932){\color[rgb]{0,0,0}\makebox(0,0)[t]{\lineheight{1.25}\smash{\begin{tabular}[t]{c}$0$\end{tabular}}}}%
    \put(0.68842251,0.03403932){\color[rgb]{0,0,0}\makebox(0,0)[t]{\lineheight{1.25}\smash{\begin{tabular}[t]{c}$2$\end{tabular}}}}%
    \put(0.73434798,0.03403932){\color[rgb]{0,0,0}\makebox(0,0)[t]{\lineheight{1.25}\smash{\begin{tabular}[t]{c}$4$\end{tabular}}}}%
    \put(0.78027322,0.03403932){\color[rgb]{0,0,0}\makebox(0,0)[t]{\lineheight{1.25}\smash{\begin{tabular}[t]{c}$6$\end{tabular}}}}%
    \put(0.82619864,0.03403932){\color[rgb]{0,0,0}\makebox(0,0)[t]{\lineheight{1.25}\smash{\begin{tabular}[t]{c}$8$\end{tabular}}}}%
    \put(0.87212407,0.03403932){\color[rgb]{0,0,0}\makebox(0,0)[t]{\lineheight{1.25}\smash{\begin{tabular}[t]{c}$10$\end{tabular}}}}%
    \put(0.91804949,0.03403932){\color[rgb]{0,0,0}\makebox(0,0)[t]{\lineheight{1.25}\smash{\begin{tabular}[t]{c}$12$\end{tabular}}}}%
    \put(0.96397482,0.03403932){\color[rgb]{0,0,0}\makebox(0,0)[t]{\lineheight{1.25}\smash{\begin{tabular}[t]{c}$14$\end{tabular}}}}%
    \put(0,0){\includegraphics[width=\unitlength,page=4]{time_series_example_sloped.pdf}}%
    \put(0.60884939,0.09696434){\color[rgb]{0,0,0}\makebox(0,0)[rt]{\lineheight{1.25}\smash{\begin{tabular}[t]{r}$0$\end{tabular}}}}%
    \put(0.60884939,0.14285885){\color[rgb]{0,0,0}\makebox(0,0)[rt]{\lineheight{1.25}\smash{\begin{tabular}[t]{r}$2$\end{tabular}}}}%
    \put(0.60884939,0.18875342){\color[rgb]{0,0,0}\makebox(0,0)[rt]{\lineheight{1.25}\smash{\begin{tabular}[t]{r}$4$\end{tabular}}}}%
    \put(0.60884939,0.23464797){\color[rgb]{0,0,0}\makebox(0,0)[rt]{\lineheight{1.25}\smash{\begin{tabular}[t]{r}$6$\end{tabular}}}}%
    \put(0.60884939,0.28054253){\color[rgb]{0,0,0}\makebox(0,0)[rt]{\lineheight{1.25}\smash{\begin{tabular}[t]{r}$8$\end{tabular}}}}%
    \put(0.60884939,0.32643706){\color[rgb]{0,0,0}\makebox(0,0)[rt]{\lineheight{1.25}\smash{\begin{tabular}[t]{r}$10$\end{tabular}}}}%
    \put(0.60884939,0.37233163){\color[rgb]{0,0,0}\makebox(0,0)[rt]{\lineheight{1.25}\smash{\begin{tabular}[t]{r}$12$\end{tabular}}}}%
    \put(0.80323588,0.00223602){\color[rgb]{0,0,0}\makebox(0,0)[t]{\lineheight{1.25}\smash{\begin{tabular}[t]{c}onset [100 $\times$ ms]\end{tabular}}}}%
    \put(0.54169714,0.24182126){\color[rgb]{0,0,0}\rotatebox{90}{\makebox(0,0)[t]{\lineheight{1.25}\smash{\begin{tabular}[t]{c}x-coord. [100 $\times$ px]\end{tabular}}}}}%
    \put(0,0){\includegraphics[width=\unitlength,page=5]{time_series_example_sloped.pdf}}%
  \end{picture}%
\endgroup%

%% file: figures/pipeline.pdf_tex
\begingroup%
  \makeatletter%
  \providecommand\color[2][]{%
    \errmessage{(Inkscape) Color is used for the text in Inkscape, but the package 'color.sty' is not loaded}%
    \renewcommand\color[2][]{}%
  }%
  \providecommand\transparent[1]{%
    \errmessage{(Inkscape) Transparency is used (non-zero) for the text in Inkscape, but the package 'transparent.sty' is not loaded}%
    \renewcommand\transparent[1]{}%
  }%
  \providecommand\rotatebox[2]{#2}%
  \newcommand*\fsize{\dimexpr\f@size pt\relax}%
  \newcommand*\lineheight[1]{\fontsize{\fsize}{#1\fsize}\selectfont}%
  \ifx\svgwidth\undefined%
    \setlength{\unitlength}{386.75001526bp}%
    \ifx\svgscale\undefined%
      \relax%
    \else%
      \setlength{\unitlength}{\unitlength * \real{\svgscale}}%
    \fi%
  \else%
    \setlength{\unitlength}{\svgwidth}%
  \fi%
  \global\let\svgwidth\undefined%
  \global\let\svgscale\undefined%
  \makeatother%
  \begin{picture}(1,0.3702327)%
    \lineheight{1}%
    \setlength\tabcolsep{0pt}%
    \put(0.10278107,0.19259965){\makebox(0,0)[t]{\lineheight{1.25}\smash{\begin{tabular}[t]{c}Fixation sequence\\$T_{j,k}$\end{tabular}}}}%
    \put(0.31399696,0.13975544){\makebox(0,0)[t]{\lineheight{1.25}\smash{\begin{tabular}[t]{c}Time series\\$T_{j,k}^{y}$\end{tabular}}}}%
    \put(0,0){\includegraphics[width=\unitlength,page=1]{pipeline.pdf}}%
    \put(0.45903734,0.2595592){\color[rgb]{0,0,0}\makebox(0,0)[t]{\lineheight{0}\smash{\begin{tabular}[t]{c}PD(s)\end{tabular}}}}%
    \put(0.59414787,0.2595592){\color[rgb]{0,0,0}\makebox(0,0)[t]{\lineheight{0}\smash{\begin{tabular}[t]{c}PI(s)\end{tabular}}}}%
    \put(0,0){\includegraphics[width=\unitlength,page=2]{pipeline.pdf}}%
    \put(0.69109807,0.2662348){\makebox(0,0)[t]{\lineheight{1.25}\smash{\begin{tabular}[t]{c}flatten\end{tabular}}}}%
    \put(0,0){\includegraphics[width=\unitlength,page=3]{pipeline.pdf}}%
    \put(0.92624413,0.17712785){\makebox(0,0)[t]{\lineheight{1.25}\smash{\begin{tabular}[t]{c}PCA\end{tabular}}}}%
    \put(0,0){\includegraphics[width=\unitlength,page=4]{pipeline.pdf}}%
    \put(0.91460874,0.05247321){\makebox(0,0)[t]{\lineheight{1.25}\smash{\begin{tabular}[t]{c}Feature vector\\$v_{j,k}^{\mathrm{TSH}}$\end{tabular}}}}%
    \put(0,0){\includegraphics[width=\unitlength,page=5]{pipeline.pdf}}%
    \put(0.79445347,0.19458841){\makebox(0,0)[t]{\lineheight{1.25}\smash{\begin{tabular}[t]{c}concat\end{tabular}}}}%
    \put(0,0){\includegraphics[width=\unitlength,page=6]{pipeline.pdf}}%
    \put(0.69109807,0.08914872){\makebox(0,0)[t]{\lineheight{1.25}\smash{\begin{tabular}[t]{c}flatten\end{tabular}}}}%
    \put(0.59414787,0.15361702){\color[rgb]{0,0,0}\makebox(0,0)[t]{\lineheight{0}\smash{\begin{tabular}[t]{c}PI(s)\end{tabular}}}}%
    \put(0.45952218,0.15281223){\makebox(0,0)[t]{\lineheight{1.25}\smash{\begin{tabular}[t]{c}PD(s)\end{tabular}}}}%
    \put(0,0){\includegraphics[width=\unitlength,page=7]{pipeline.pdf}}%
    \put(0.31448177,0.33416396){\makebox(0,0)[t]{\lineheight{1.25}\smash{\begin{tabular}[t]{c}Time series\\$T_{j,k}^{x}$\end{tabular}}}}%
    \put(0,0){\includegraphics[width=\unitlength,page=8]{pipeline.pdf}}%
  \end{picture}%
\endgroup%

%% file: tables/table_hyperparameters.tex
\begin{table}
  \caption{Search spaces used for randomized hyperparameter tuning.}
  \label{table:hyperparameter_search}
  \centering
  \newsavebox{\hyperparamtablebox}
  \sbox{\hyperparamtablebox}{%
    \begin{tabular}{lll}
      \toprule
      Model class                  & Hyperparameter                      & Domain/values                               \\
      \midrule
      \multirow{2}{*}{hybrid, TSH} & Slope \(c\)\(^\ast\)                & \([-4,-0.5]\cup[0.5,4]\)                    \\
                                   & Bandwidth \(\sigma\)\(^\dagger\)    & \([10^{-3},10^{-1}]\)                       \\
      \midrule
      \multirow{4}{*}{TSH}         & Kernel\(^\ddagger\)                 & \(\left\{\text{linear},\text{rbf}\right\}\) \\
                                   & \(C\) (linear kernel)\(^\dagger\)   & \([10^{-2}, 10^{1}]\)                       \\
                                   & \(C\) (rbf kernel)\(^\dagger\)      & \([10^{-1}, 10^{2}]\)                       \\
                                   & \(\gamma\) (rbf kernel)\(^\dagger\) & \([10^{-4}, 10^{-2}]\)                      \\
      \bottomrule
    \end{tabular}%
  }
  \usebox{\hyperparamtablebox}
  \par\smallskip
  \makebox[\wd\hyperparamtablebox][l]{%
    \begin{minipage}[t]{\wd\hyperparamtablebox}
      \footnotesize
      \(^\ast\) Sampled uniformly with respect to angle\\
      \(^\dagger\) Sampled log-uniformly\\
      \(^\ddagger\) Sampled uniformly
    \end{minipage}%
  }
\end{table}

%% file: tables/table_results_roc_auc_short.tex
\begin{table}
  \caption{Mean ROC AUC scores by aggregation level and model class. Subscripts indicate standard deviation.}
  \centering
  \label{table:results_roc_auc_short}
  \begin{tabular}{clcc}
    \toprule
                &             & \multicolumn{2}{c}{Mean ROC AUC score}                            \\
    \cmidrule(lr){3-4}
    Aggregation & Model class & Including L2                           & Excluding L2             \\
    \midrule
    \multirow{3}{*}{Trial-level}
                & Hybrid      & \(\mathbf{0.88_{0.07}}\)               & \(\mathbf{0.89_{0.12}}\) \\
                & Baseline    & \(0.85_{0.12}\)                        & \(0.87_{0.17}\)          \\
                & TSH         & \(0.81_{0.17}\)                        & \(0.87_{0.16}\)          \\
    \midrule
    \multirow{3}{*}{Reader-level}
                & Hybrid      & \(\mathbf{0.92_{0.10}}\)               & \(\mathbf{0.99_{0.02}}\) \\
                & Baseline    & \(0.89_{0.08}\)                        & \(0.97_{0.04}\)          \\
                & TSH         & \(0.88_{0.17}\)                        & \(0.95_{0.07}\)          \\
    \bottomrule
  \end{tabular}
\end{table}

%% file: figures/extended_persistence.pdf_tex
\begingroup%
  \makeatletter%
  \providecommand\color[2][]{%
    \errmessage{(Inkscape) Color is used for the text in Inkscape, but the package 'color.sty' is not loaded}%
    \renewcommand\color[2][]{}%
  }%
  \providecommand\transparent[1]{%
    \errmessage{(Inkscape) Transparency is used (non-zero) for the text in Inkscape, but the package 'transparent.sty' is not loaded}%
    \renewcommand\transparent[1]{}%
  }%
  \providecommand\rotatebox[2]{#2}%
  \newcommand*\fsize{\dimexpr\f@size pt\relax}%
  \newcommand*\lineheight[1]{\fontsize{\fsize}{#1\fsize}\selectfont}%
  \ifx\svgwidth\undefined%
    \setlength{\unitlength}{423.17254639bp}%
    \ifx\svgscale\undefined%
      \relax%
    \else%
      \setlength{\unitlength}{\unitlength * \real{\svgscale}}%
    \fi%
  \else%
    \setlength{\unitlength}{\svgwidth}%
  \fi%
  \global\let\svgwidth\undefined%
  \global\let\svgscale\undefined%
  \makeatother%
  \begin{picture}(1,0.3787466)%
    \lineheight{1}%
    \setlength\tabcolsep{0pt}%
    \put(0,0){\includegraphics[width=\unitlength,page=1]{extended_persistence.pdf}}%
    \put(0.66002135,0.00373912){\color[rgb]{0,0,0}\makebox(0,0)[lt]{\lineheight{0}\smash{\begin{tabular}[t]{l}$a_{2}$\end{tabular}}}}%
    \put(0,0){\includegraphics[width=\unitlength,page=2]{extended_persistence.pdf}}%
    \put(0.68598241,0.00373912){\color[rgb]{0,0,0}\makebox(0,0)[lt]{\lineheight{0}\smash{\begin{tabular}[t]{l}$e_{2}$\end{tabular}}}}%
    \put(0,0){\includegraphics[width=\unitlength,page=3]{extended_persistence.pdf}}%
    \put(0.75088511,0.00373912){\color[rgb]{0,0,0}\makebox(0,0)[lt]{\lineheight{0}\smash{\begin{tabular}[t]{l}$c_{2}$\end{tabular}}}}%
    \put(0,0){\includegraphics[width=\unitlength,page=4]{extended_persistence.pdf}}%
    \put(0.97415039,0.0145731){\color[rgb]{0,0,0}\makebox(0,0)[lt]{\lineheight{0}\smash{\begin{tabular}[t]{l}$b$\end{tabular}}}}%
    \put(0.64859848,0.36597893){\color[rgb]{0,0,0}\makebox(0,0)[lt]{\lineheight{0}\smash{\begin{tabular}[t]{l}$d$\end{tabular}}}}%
    \put(0,0){\includegraphics[width=\unitlength,page=5]{extended_persistence.pdf}}%
    \put(0.84174892,0.00373915){\color[rgb]{0,0,0}\makebox(0,0)[lt]{\lineheight{0}\smash{\begin{tabular}[t]{l}$b_{2}$\end{tabular}}}}%
    \put(0,0){\includegraphics[width=\unitlength,page=6]{extended_persistence.pdf}}%
    \put(0.90661619,0.00373915){\color[rgb]{0,0,0}\makebox(0,0)[lt]{\lineheight{0}\smash{\begin{tabular}[t]{l}$d_{1}$\end{tabular}}}}%
    \put(0,0){\includegraphics[width=\unitlength,page=7]{extended_persistence.pdf}}%
    \put(0.63148585,0.21477823){\color[rgb]{0,0,0}\makebox(0,0)[lt]{\lineheight{0}\smash{\begin{tabular}[t]{l}$b_{2}$\end{tabular}}}}%
    \put(0.63299504,0.12368036){\color[rgb]{0,0,0}\makebox(0,0)[lt]{\lineheight{0}\smash{\begin{tabular}[t]{l}$c_{2}$\end{tabular}}}}%
    \put(0.63299271,0.05965541){\color[rgb]{0,0,0}\makebox(0,0)[lt]{\lineheight{0}\smash{\begin{tabular}[t]{l}$e_{2}$\end{tabular}}}}%
    \put(0.6310147,0.03369433){\color[rgb]{0,0,0}\makebox(0,0)[lt]{\lineheight{0}\smash{\begin{tabular}[t]{l}$a_{2}$\end{tabular}}}}%
    \put(0,0){\includegraphics[width=\unitlength,page=8]{extended_persistence.pdf}}%
    \put(0.631328,0.2803245){\color[rgb]{0,0,0}\makebox(0,0)[lt]{\lineheight{0}\smash{\begin{tabular}[t]{l}$d_{2}$\end{tabular}}}}%
    \put(0,0){\includegraphics[width=\unitlength,page=9]{extended_persistence.pdf}}%
    \put(0.38143968,0.0145731){\color[rgb]{0,0,0}\makebox(0,0)[lt]{\lineheight{0}\smash{\begin{tabular}[t]{l}$t$\end{tabular}}}}%
    \put(0.05588781,0.36597896){\color[rgb]{0,0,0}\makebox(0,0)[lt]{\lineheight{0}\smash{\begin{tabular}[t]{l}$x$\end{tabular}}}}%
    \put(0.30096037,0.00373912){\color[rgb]{0,0,0}\makebox(0,0)[lt]{\lineheight{0}\smash{\begin{tabular}[t]{l}$e_{1}$\end{tabular}}}}%
    \put(0.24903823,0.00373912){\color[rgb]{0,0,0}\makebox(0,0)[lt]{\lineheight{0}\smash{\begin{tabular}[t]{l}$d_{1}$\end{tabular}}}}%
    \put(0.18413554,0.00373912){\color[rgb]{0,0,0}\makebox(0,0)[lt]{\lineheight{0}\smash{\begin{tabular}[t]{l}$c_{1}$\end{tabular}}}}%
    \put(0.14519391,0.00373912){\color[rgb]{0,0,0}\makebox(0,0)[lt]{\lineheight{0}\smash{\begin{tabular}[t]{l}$b_{1}$\end{tabular}}}}%
    \put(0.10587758,0.00350466){\color[rgb]{0,0,0}\makebox(0,0)[lt]{\lineheight{0}\smash{\begin{tabular}[t]{l}$a_{1}$\end{tabular}}}}%
    \put(0.00025274,0.28032459){\color[rgb]{0,0,0}\makebox(0,0)[lt]{\lineheight{0}\smash{\begin{tabular}[t]{l}$\hat{f}=d_{2}$\end{tabular}}}}%
    \put(0.03689529,0.21477827){\color[rgb]{0,0,0}\makebox(0,0)[lt]{\lineheight{0}\smash{\begin{tabular}[t]{l}$b_{2}$\end{tabular}}}}%
    \put(0.03830633,0.12368041){\color[rgb]{0,0,0}\makebox(0,0)[lt]{\lineheight{0}\smash{\begin{tabular}[t]{l}$c_{2}$\end{tabular}}}}%
    \put(0.03830401,0.05965544){\color[rgb]{0,0,0}\makebox(0,0)[lt]{\lineheight{0}\smash{\begin{tabular}[t]{l}$e_{2}$\end{tabular}}}}%
    \put(0.00009725,0.03304144){\color[rgb]{0,0,0}\makebox(0,0)[lt]{\lineheight{0}\smash{\begin{tabular}[t]{l}$\check{f}=a_{2}$\end{tabular}}}}%
    \put(0,0){\includegraphics[width=\unitlength,page=10]{extended_persistence.pdf}}%
    \put(0.37998314,0.08689814){\color[rgb]{0,0,0}\makebox(0,0)[lt]{\lineheight{0}\smash{\begin{tabular}[t]{l}$x=h$\end{tabular}}}}%
    \put(0.3285593,0.28117002){\color[rgb]{1,0,0}\makebox(0,0)[lt]{\lineheight{0}\smash{\begin{tabular}[t]{l}$\Gamma(T)$\end{tabular}}}}%
    \put(0,0){\includegraphics[width=\unitlength,page=11]{extended_persistence.pdf}}%
  \end{picture}%
\endgroup%

%% file: figures/time_series_example_sigmoid_and_arctan.pdf_tex
\begingroup%
  \makeatletter%
  \providecommand\color[2][]{%
    \errmessage{(Inkscape) Color is used for the text in Inkscape, but the package 'color.sty' is not loaded}%
    \renewcommand\color[2][]{}%
  }%
  \providecommand\transparent[1]{%
    \errmessage{(Inkscape) Transparency is used (non-zero) for the text in Inkscape, but the package 'transparent.sty' is not loaded}%
    \renewcommand\transparent[1]{}%
  }%
  \providecommand\rotatebox[2]{#2}%
  \newcommand*\fsize{\dimexpr\f@size pt\relax}%
  \newcommand*\lineheight[1]{\fontsize{\fsize}{#1\fsize}\selectfont}%
  \ifx\svgwidth\undefined%
    \setlength{\unitlength}{241.14749146bp}%
    \ifx\svgscale\undefined%
      \relax%
    \else%
      \setlength{\unitlength}{\unitlength * \real{\svgscale}}%
    \fi%
  \else%
    \setlength{\unitlength}{\svgwidth}%
  \fi%
  \global\let\svgwidth\undefined%
  \global\let\svgscale\undefined%
  \makeatother%
  \begin{picture}(1,0.42726867)%
    \lineheight{1}%
    \setlength\tabcolsep{0pt}%
    \put(0,0){\includegraphics[width=\unitlength,page=1]{time_series_example_sigmoid_and_arctan.pdf}}%
    \put(0.10979582,0.03380685){\color[rgb]{0,0,0}\makebox(0,0)[t]{\lineheight{1.25}\smash{\begin{tabular}[t]{c}$0$\end{tabular}}}}%
    \put(0.1557213,0.03380685){\color[rgb]{0,0,0}\makebox(0,0)[t]{\lineheight{1.25}\smash{\begin{tabular}[t]{c}$2$\end{tabular}}}}%
    \put(0.2016466,0.03380685){\color[rgb]{0,0,0}\makebox(0,0)[t]{\lineheight{1.25}\smash{\begin{tabular}[t]{c}$4$\end{tabular}}}}%
    \put(0.24757203,0.03380685){\color[rgb]{0,0,0}\makebox(0,0)[t]{\lineheight{1.25}\smash{\begin{tabular}[t]{c}$6$\end{tabular}}}}%
    \put(0.2934974,0.03380685){\color[rgb]{0,0,0}\makebox(0,0)[t]{\lineheight{1.25}\smash{\begin{tabular}[t]{c}$8$\end{tabular}}}}%
    \put(0.33942279,0.03380685){\color[rgb]{0,0,0}\makebox(0,0)[t]{\lineheight{1.25}\smash{\begin{tabular}[t]{c}$10$\end{tabular}}}}%
    \put(0.38534822,0.03380685){\color[rgb]{0,0,0}\makebox(0,0)[t]{\lineheight{1.25}\smash{\begin{tabular}[t]{c}$12$\end{tabular}}}}%
    \put(0.43127357,0.03380685){\color[rgb]{0,0,0}\makebox(0,0)[t]{\lineheight{1.25}\smash{\begin{tabular}[t]{c}$14$\end{tabular}}}}%
    \put(0,0){\includegraphics[width=\unitlength,page=2]{time_series_example_sigmoid_and_arctan.pdf}}%
    \put(0.07614807,0.09630201){\color[rgb]{0,0,0}\makebox(0,0)[rt]{\lineheight{1.25}\smash{\begin{tabular}[t]{r}$0$\end{tabular}}}}%
    \put(0.07614807,0.14188304){\color[rgb]{0,0,0}\makebox(0,0)[rt]{\lineheight{1.25}\smash{\begin{tabular}[t]{r}$2$\end{tabular}}}}%
    \put(0.07614807,0.18746413){\color[rgb]{0,0,0}\makebox(0,0)[rt]{\lineheight{1.25}\smash{\begin{tabular}[t]{r}$4$\end{tabular}}}}%
    \put(0.07614807,0.23304514){\color[rgb]{0,0,0}\makebox(0,0)[rt]{\lineheight{1.25}\smash{\begin{tabular}[t]{r}$6$\end{tabular}}}}%
    \put(0.07614807,0.27862624){\color[rgb]{0,0,0}\makebox(0,0)[rt]{\lineheight{1.25}\smash{\begin{tabular}[t]{r}$8$\end{tabular}}}}%
    \put(0.07614808,0.32420726){\color[rgb]{0,0,0}\makebox(0,0)[rt]{\lineheight{1.25}\smash{\begin{tabular}[t]{r}$10$\end{tabular}}}}%
    \put(0.07614808,0.36978833){\color[rgb]{0,0,0}\makebox(0,0)[rt]{\lineheight{1.25}\smash{\begin{tabular}[t]{r}$12$\end{tabular}}}}%
    \put(0.27053474,0.00222079){\color[rgb]{0,0,0}\makebox(0,0)[t]{\lineheight{1.25}\smash{\begin{tabular}[t]{c}onset [100 $\times$ ms]\end{tabular}}}}%
    \put(0.0089959,0.2401694){\color[rgb]{0,0,0}\rotatebox{90}{\makebox(0,0)[t]{\lineheight{1.25}\smash{\begin{tabular}[t]{c}x-coord. [100 $\times$ px]\end{tabular}}}}}%
    \put(0,0){\includegraphics[width=\unitlength,page=3]{time_series_example_sigmoid_and_arctan.pdf}}%
    \put(0.64249704,0.03380685){\color[rgb]{0,0,0}\makebox(0,0)[t]{\lineheight{1.25}\smash{\begin{tabular}[t]{c}$0$\end{tabular}}}}%
    \put(0.68842255,0.03380685){\color[rgb]{0,0,0}\makebox(0,0)[t]{\lineheight{1.25}\smash{\begin{tabular}[t]{c}$2$\end{tabular}}}}%
    \put(0.73434792,0.03380685){\color[rgb]{0,0,0}\makebox(0,0)[t]{\lineheight{1.25}\smash{\begin{tabular}[t]{c}$4$\end{tabular}}}}%
    \put(0.78027338,0.03380685){\color[rgb]{0,0,0}\makebox(0,0)[t]{\lineheight{1.25}\smash{\begin{tabular}[t]{c}$6$\end{tabular}}}}%
    \put(0.82619855,0.03380685){\color[rgb]{0,0,0}\makebox(0,0)[t]{\lineheight{1.25}\smash{\begin{tabular}[t]{c}$8$\end{tabular}}}}%
    \put(0.87212396,0.03380685){\color[rgb]{0,0,0}\makebox(0,0)[t]{\lineheight{1.25}\smash{\begin{tabular}[t]{c}$10$\end{tabular}}}}%
    \put(0.91804957,0.03380685){\color[rgb]{0,0,0}\makebox(0,0)[t]{\lineheight{1.25}\smash{\begin{tabular}[t]{c}$12$\end{tabular}}}}%
    \put(0.96397488,0.03380685){\color[rgb]{0,0,0}\makebox(0,0)[t]{\lineheight{1.25}\smash{\begin{tabular}[t]{c}$14$\end{tabular}}}}%
    \put(0,0){\includegraphics[width=\unitlength,page=4]{time_series_example_sigmoid_and_arctan.pdf}}%
    \put(0.6088494,0.09630201){\color[rgb]{0,0,0}\makebox(0,0)[rt]{\lineheight{1.25}\smash{\begin{tabular}[t]{r}$0$\end{tabular}}}}%
    \put(0.6088494,0.14188304){\color[rgb]{0,0,0}\makebox(0,0)[rt]{\lineheight{1.25}\smash{\begin{tabular}[t]{r}$2$\end{tabular}}}}%
    \put(0.6088494,0.1874642){\color[rgb]{0,0,0}\makebox(0,0)[rt]{\lineheight{1.25}\smash{\begin{tabular}[t]{r}$4$\end{tabular}}}}%
    \put(0.6088494,0.23304514){\color[rgb]{0,0,0}\makebox(0,0)[rt]{\lineheight{1.25}\smash{\begin{tabular}[t]{r}$6$\end{tabular}}}}%
    \put(0.6088494,0.27862624){\color[rgb]{0,0,0}\makebox(0,0)[rt]{\lineheight{1.25}\smash{\begin{tabular}[t]{r}$8$\end{tabular}}}}%
    \put(0.6088494,0.32420726){\color[rgb]{0,0,0}\makebox(0,0)[rt]{\lineheight{1.25}\smash{\begin{tabular}[t]{r}$10$\end{tabular}}}}%
    \put(0.6088494,0.36978833){\color[rgb]{0,0,0}\makebox(0,0)[rt]{\lineheight{1.25}\smash{\begin{tabular}[t]{r}$12$\end{tabular}}}}%
    \put(0.80323594,0.00222079){\color[rgb]{0,0,0}\makebox(0,0)[t]{\lineheight{1.25}\smash{\begin{tabular}[t]{c}onset [100 $\times$ ms]\end{tabular}}}}%
    \put(0.54169713,0.24016948){\color[rgb]{0,0,0}\rotatebox{90}{\makebox(0,0)[t]{\lineheight{1.25}\smash{\begin{tabular}[t]{c}x-coord. [100 $\times$ px]\end{tabular}}}}}%
    \put(0,0){\includegraphics[width=\unitlength,page=5]{time_series_example_sigmoid_and_arctan.pdf}}%
  \end{picture}%
\endgroup%

%% file: tables/table_results_roc_auc.tex
\begin{table*}
  \caption{Mean ROC AUC scores by model and aggregation level. Subscripts indicate standard deviation.}
  \centering
  \label{table:results_roc_auc}
  \begin{tabular}{clcccc}
    \toprule
     &                                                         & \multicolumn{4}{c}{Mean ROC AUC score}                                                                                             \\
    \cmidrule(lr){3-6}
     & Model name
     & \multicolumn{2}{c}{Including L2}
     & \multicolumn{2}{c}{Excluding L2}                                                                                                                                                             \\
    \cmidrule(lr){3-4}\cmidrule(lr){5-6}
     &                                                         & Ord. persistence                       & Ext. persistence                    & Ord. persistence         & Ext. persistence         \\
    \midrule
    \multirow{19}{*}{\rotatebox{90}{TRIAL-LEVEL}}
     & BL\textsubscript{Bjö}+TSH\textsubscript{horizontal}     & \(0.83_{0.11}\)                        & \(0.81_{0.12}\)                     & \(0.88_{0.13}\)          & \(0.88_{0.12}\)          \\
     & BL\textsubscript{Bjö}+TSH\textsubscript{sloped}         & \(0.82_{0.12}\)                        & \(0.81_{0.12}\)                     & \(\mathbf{0.89_{0.12}}\) & \(0.88_{0.12}\)          \\
     & BL\textsubscript{Bjö}+TSH\textsubscript{sigmoid}        & \(0.82_{0.12}\)                        & \(0.80_{0.12}\)                     & \(0.88_{0.13}\)          & \(0.88_{0.12}\)          \\
     & BL\textsubscript{Bjö}+TSH\textsubscript{arctan}         & \(0.81_{0.12}\)                        & \(0.81_{0.12}\)                     & \(0.88_{0.13}\)          & \(0.88_{0.13}\)          \\
     & BL\textsubscript{Raa-RF}+TSH\textsubscript{horizontal}  & \(0.82_{0.12}\)                        & \(0.75_{0.13}\)                     & \(0.86_{0.16}\)          & \(0.83_{0.15}\)          \\
     & BL\textsubscript{Raa-RF}+TSH\textsubscript{sloped}      & \(0.85_{0.09}\)                        & \(0.77_{0.12}\)                     & \(0.86_{0.17}\)          & \(0.83_{0.16}\)          \\
     & BL\textsubscript{Raa-RF}+TSH\textsubscript{sigmoid}     & \(0.84_{0.10}\)                        & \(0.79_{0.10}\)                     & \(0.86_{0.18}\)          & \(0.82_{0.17}\)          \\
     & BL\textsubscript{Raa-RF}+TSH\textsubscript{arctan}      & \(0.85_{0.10}\)                        & \(0.83_{0.10}\)                     & \(0.86_{0.18}\)          & \(0.84_{0.18}\)          \\
     & BL\textsubscript{Raa-SVC}+TSH\textsubscript{horizontal} & \(0.87_{0.07}\)                        & \(0.87_{0.07}\)                     & \(0.86_{0.19}\)          & \(0.88_{0.18}\)          \\
     & BL\textsubscript{Raa-SVC}+TSH\textsubscript{sloped}     & \(\mathbf{0.88_{0.08}}\)               & \(\mathbf{0.88_{0.07}}\)            & \(0.87_{0.19}\)          & \(0.88_{0.18}\)          \\
     & BL\textsubscript{Raa-SVC}+TSH\textsubscript{sigmoid}    & \(0.87_{0.08}\)                        & \(\mathbf{0.88_{0.07}}\)            & \(0.86_{0.21}\)          & \(0.86_{0.21}\)          \\
     & BL\textsubscript{Raa-SVC}+TSH\textsubscript{arctan}     & \(0.87_{0.08}\)                        & \(0.85_{0.11}\)                     & \(0.87_{0.20}\)          & \(0.87_{0.20}\)          \\
     & BL\textsubscript{Bjö}                                   & \multicolumn{2}{c}{\(0.81_{0.16}\)}    & \multicolumn{2}{c}{\(0.84_{0.21}\)}                                                       \\
     & BL\textsubscript{Raa-RF}                                & \multicolumn{2}{c}{\(0.82_{0.14}\)}    & \multicolumn{2}{c}{\(0.87_{0.17}\)}                                                       \\
     & BL\textsubscript{Raa-SVC}                               & \multicolumn{2}{c}{\(0.85_{0.12}\)}    & \multicolumn{2}{c}{\(0.87_{0.18}\)}                                                       \\
     & TSH\textsubscript{horizontal}                           & \(0.80_{0.19}\)                        & \(0.80_{0.19}\)                     & \(0.85_{0.18}\)          & \(0.85_{0.19}\)          \\
     & TSH\textsubscript{sloped}                               & \(0.80_{0.17}\)                        & \(0.81_{0.17}\)                     & \(0.87_{0.16}\)          & \(0.87_{0.16}\)          \\
     & TSH\textsubscript{sigmoid}                              & \(0.79_{0.17}\)                        & \(0.79_{0.16}\)                     & \(0.85_{0.18}\)          & \(0.85_{0.19}\)          \\
     & TSH\textsubscript{arctan}                               & \(0.76_{0.17}\)                        & \(0.77_{0.17}\)                     & \(0.83_{0.15}\)          & \(0.84_{0.16}\)          \\
    \midrule
    \multirow{19}{*}{\rotatebox{90}{READER-LEVEL}}
     & BL\textsubscript{Bjö}+TSH\textsubscript{horizontal}     & \(0.89_{0.07}\)                        & \(0.75_{0.08}\)                     & \(0.96_{0.05}\)          & \(0.87_{0.13}\)          \\
     & BL\textsubscript{Bjö}+TSH\textsubscript{sloped}         & \(0.77_{0.14}\)                        & \(0.80_{0.15}\)                     & \(0.92_{0.10}\)          & \(0.88_{0.19}\)          \\
     & BL\textsubscript{Bjö}+TSH\textsubscript{sigmoid}        & \(0.84_{0.15}\)                        & \(0.84_{0.13}\)                     & \(0.92_{0.08}\)          & \(0.94_{0.05}\)          \\
     & BL\textsubscript{Bjö}+TSH\textsubscript{arctan}         & \(0.80_{0.15}\)                        & \(0.76_{0.11}\)                     & \(0.90_{0.16}\)          & \(0.91_{0.09}\)          \\
     & BL\textsubscript{Raa-RF}+TSH\textsubscript{horizontal}  & \(0.84_{0.16}\)                        & \(0.64_{0.10}\)                     & \(0.86_{0.09}\)          & \(0.85_{0.12}\)          \\
     & BL\textsubscript{Raa-RF}+TSH\textsubscript{sloped}      & \(0.85_{0.18}\)                        & \(0.79_{0.16}\)                     & \(0.82_{0.11}\)          & \(0.89_{0.06}\)          \\
     & BL\textsubscript{Raa-RF}+TSH\textsubscript{sigmoid}     & \(0.89_{0.12}\)                        & \(0.81_{0.12}\)                     & \(0.88_{0.14}\)          & \(0.93_{0.10}\)          \\
     & BL\textsubscript{Raa-RF}+TSH\textsubscript{arctan}      & \(0.87_{0.11}\)                        & \(0.86_{0.09}\)                     & \(0.80_{0.20}\)          & \(0.94_{0.05}\)          \\
     & BL\textsubscript{Raa-SVC}+TSH\textsubscript{horizontal} & \(0.89_{0.11}\)                        & \(0.90_{0.13}\)                     & \(0.98_{0.03}\)          & \(0.97_{0.07}\)          \\
     & BL\textsubscript{Raa-SVC}+TSH\textsubscript{sloped}     & \(0.84_{0.13}\)                        & \(0.86_{0.13}\)                     & \(0.95_{0.07}\)          & \(0.95_{0.07}\)          \\
     & BL\textsubscript{Raa-SVC}+TSH\textsubscript{sigmoid}    & \(\mathbf{0.92_{0.10}}\)               & \(0.88_{0.12}\)                     & \(\mathbf{0.99_{0.02}}\) & \(\mathbf{0.99_{0.02}}\) \\
     & BL\textsubscript{Raa-SVC}+TSH\textsubscript{arctan}     & \(0.87_{0.11}\)                        & \(0.90_{0.09}\)                     & \(\mathbf{0.99_{0.02}}\) & \(\mathbf{0.99_{0.02}}\) \\
     & BL\textsubscript{Bjö}                                   & \multicolumn{2}{c}{\(0.89_{0.08}\)}    & \multicolumn{2}{c}{\(0.93_{0.08}\)}                                                       \\
     & BL\textsubscript{Raa-RF}                                & \multicolumn{2}{c}{\(0.88_{0.12}\)}    & \multicolumn{2}{c}{\(0.94_{0.08}\)}                                                       \\
     & BL\textsubscript{Raa-SVC}                               & \multicolumn{2}{c}{\(0.88_{0.10}\)}    & \multicolumn{2}{c}{\(0.97_{0.04}\)}                                                       \\
     & TSH\textsubscript{horizontal}                           & \(0.84_{0.16}\)                        & \(0.84_{0.16}\)                     & \(0.92_{0.09}\)          & \(0.87_{0.11}\)          \\
     & TSH\textsubscript{sloped}                               & \(0.81_{0.18}\)                        & \(0.84_{0.12}\)                     & \(0.95_{0.07}\)          & \(0.90_{0.12}\)          \\
     & TSH\textsubscript{sigmoid}                              & \(0.83_{0.21}\)                        & \(0.82_{0.11}\)                     & \(0.93_{0.10}\)          & \(0.92_{0.05}\)          \\
     & TSH\textsubscript{arctan}                               & \(0.88_{0.17}\)                        & \(0.83_{0.17}\)                     & \(0.80_{0.19}\)          & \(0.87_{0.11}\)          \\
    \bottomrule
  \end{tabular}
\end{table*}